\newtheorem{lemma}{Lemma} 
\newtheorem{theorem}{Theorem}
\newtheorem{definition}{Definition}
\newtheorem{proposition}{Proposition}
\title{From Points to Coalitions: Hierarchical Contrastive Shapley Values for Prioritizing Data Samples}
\author {
    Canran Xiao\textsuperscript{\rm 1},
    Jiabao Dou\textsuperscript{\rm 2},
    Zhiming Lin\textsuperscript{\rm 3}\thanks{Corresponding author.},
    Zong Ke\textsuperscript{\rm 4},
    Liwei Hou\textsuperscript{\rm 5}
}
\begin{document}

\maketitle

\begin{abstract}
How should we quantify the value of each training example when datasets are large, heterogeneous, and geometrically structured?  
Classical Data‑Shapley answers in principle, but its $O(n!)$ complexity and point‑wise perspective are ill‑suited to modern scales.  
We propose \emph{Hierarchical Contrastive Data Valuation} (\textsc{HCDV}), a three‑stage framework that  
(\textit{i}) learns a contrastive, geometry‑preserving representation,  
(\textit{ii}) organises the data into a balanced coarse‑to‑fine hierarchy of clusters, and  
(\textit{iii}) assigns Shapley‑style payoffs to coalitions via local Monte‑Carlo games whose budgets are propagated downward.  
\textsc{HCDV} collapses the factorial burden to $O\!\bigl(T\sum_{\ell}K_{\ell}\bigr)=O(TK_{\max}\log n)$, rewards examples that sharpen decision boundaries, and regularises outliers through curvature‑based smoothness.  
We prove that \textsc{HCDV} approximately satisfies the four Shapley axioms with surplus loss $O(\eta\log n)$, enjoys sub‑Gaussian coalition deviation $\tilde{O}(1/\sqrt{T})$, and incurs at most $k\varepsilon_\infty$ regret for top‑$k$ selection.  
Experiments on four benchmarks—tabular, vision, streaming, and a 45M‑sample CTR task—plus the \textsc{OpenDataVal} suite show that \textsc{HCDV} lifts accuracy by up to \(+5\)pp, slashes valuation time by up to \(100\times\), and directly supports tasks such as augmentation filtering, low‑latency streaming updates, and fair marketplace payouts.
\end{abstract}

\section{Introduction}
\label{sec:intro}
Data valuation~\citep{sim2022data,wang2023data,bendechache2023systematic,chen2025framework} plays a pivotal role in modern machine learning pipelines~\citep{shen2024efficient}. As data becomes massive and heterogeneous, quantifying the importance of individual data points---or groups of data points---helps practitioners in data curation~\citep{bhardwaj2024machine,andrews2024ethical,wang2024computing}, active sampling~\citep{yao2024swift,xu2021validation,goetz2019active}, federated learning~\citep{wang2020principled,fan2022improving,li2024data,xiao2024confusion}, and fair data pricing~\citep{pei2020survey,zhang2023survey}. 
Its significance further extends to a wide spectrum of modern applications, including autonomous systems~\citep{yao2023ndc,zhang2024cf,zhang2023multi,jiang2025transforming,xiao2025diffusion,xiao2025multifrequency}, intelligent healthcare~\citep{tong2025progress,liu2025memeblip2,wang2025medical}, and recommender systems~\citep{zhao2025generative}.

Formally, let $\mathcal{D} = \{(x_i, y_i)\}_{i=1}^n$ be a dataset of $n$ examples, with $x_i \in \mathcal{X}$ representing features (possibly high-dimensional) and $y_i \in \mathcal{Y}$ being associated labels. We denote by $v(S)$ a performance function measuring the quality (e.g., accuracy, negative loss) of a model trained on a subset $S \subseteq \mathcal{D}$. A data valuation function then assigns a numerical score $\phi_i$ to each data point $(x_i, y_i)$, reflecting its contribution to $v(\cdot)$ when considering all possible subsets.

A powerful theoretical basis for data valuation is derived from the Shapley value \cite{hart1989shapley}(SV), which, for each point $i$, is given by:
\begin{equation}
\label{eq:shapley_value_intro}
    \phi_i(\mathcal{D}) \;=\; \sum_{S \subseteq \mathcal{D}\setminus\{i\}} 
    \frac{|S|!(\,n - |S| - 1)!}{n!} 
    \Bigl[v(S \cup \{i\}) - v(S)\Bigr].
\end{equation}
Eq.~\eqref{eq:shapley_value_intro} offers a principled way to distribute the overall ``value'' of a dataset among its points, satisfying fair attribution axioms
.

Despite its elegance, applying the Shapley formula directly in practical scenarios encounters two key problems: 
(i) \emph{Combinatorial explosion.}  
    Exact computation is $O(n!)$; even Monte-Carlo estimators can be prohibitive for large $n$~\citep{fleckenstein2023review,xu2021validation}.  
(ii) \emph{Structural myopia.}  
    Treating every record as an isolated ``player'' ignores latent geometry-manifolds, semantic clusters, and causal strata-that actually govern generalisation~\citep{whang2023data}.  
\begin{quote}
\emph{How much is one example worth \textbf{when its neighbours speak on its behalf}?  
And if we let the geometry of the data-not just the individual points-join the conversation, would our notion of ``value'' change?}
\end{quote}

These questions motivate a fresh perspective: re-design the game itself so that the players are multiscale, geometry-aware neighbourhoods rather than isolated points.  
Doing so leads to our \textbf{Hierarchical Contrastive Data Valuation (\textsc{HCDV})}, whose key ideas are:
\emph{we treat a dataset not as a crowd of isolated points but as a community of neighbourhoods that talk to one another}. 

Within this multiscale framework, valuation proceeds in a coarse-to-fine manner: higher-level coalitions first apportion their collective utility, which is then recursively distributed to their constituent sub-coalitions.  
This hierarchical decomposition (i) mitigates factorial complexity by confining the combinatorial search to a modest number of clusters per level;  
(ii) accentuates informational distinctiveness by awarding greater utility to coalitions that sharpen geometric boundaries in the learned representation space; and  
(iii) preserves robustness and interpretability through smoothness regularisation, preventing outliers from exerting disproportionate influence.

The main contributions are as follows:
(\textbf{i}) We formulate Hierarchical Contrastive Data Valuation (HCDV), a scalable, geometry-aware alternative to classical Shapley data valuation.
(\textbf{ii}) We provide theoretical guarantees that \textsc{HCDV} approximates Shapley's efficiency, symmetry, dummy, and additivity axioms, with approximation error linked to cluster granularity.
(\textbf{iii}) We demonstrate, across tabular, vision, and streaming benchmarks, that \textsc{HCDV} uncovers hidden synergies, supports more effective active sampling and data pricing, and reduces runtime by one to two orders of magnitude compared with state-of-the-art Shapley approximators.

\section{Preliminaries}
\label{sec:prelim}
\subsection{Data Valuation Framework}

Consider a supervised learning setting with an input space $\mathcal{X}$ and a label space $\mathcal{Y}$.  
Let the training dataset be
$
    \mathcal{D} \;=\; \bigl\{(x_i,y_i)\bigr\}_{i=1}^{n},
    (x_i,y_i) \in \mathcal{X}\times\mathcal{Y}.
$
For any subset $S \subseteq \mathcal{D}$, a training operator
$
    \mathcal{T} : 2^{\mathcal{D}} \longrightarrow \mathcal{H}
$
maps $S$ to a model $M_S = \mathcal{T}(S)$ in a hypothesis space $\mathcal{H}$ (e.g., the parameter space of a neural network).  
Model quality is assessed on a fixed validation set $\mathcal{D}_{\mathrm{val}}$ using metric  
$\mathcal{M}\!:\!\mathcal{H}\!\times\!2^{\mathcal{D}} \rightarrow \mathbb{R}$  
such as accuracy, balanced accuracy, or negative loss.  
We shorthand the induced characteristic function:
\begin{equation}
    v(S)
    \;:=\;
    \mathcal{M}\bigl(\mathcal{T}(S),\;\mathcal{D}_{\mathrm{val}}\bigr),
    \qquad
    S \subseteq \mathcal{D},
\end{equation}
which plays the role of a \emph{payoff} in cooperative-game terminology.

\begin{definition}[Data Valuation Function]
A \emph{data valuation function} is a mapping  
$\phi : \mathcal{D} \rightarrow \mathbb{R}$  
that assigns to each point $(x_i,y_i)$ a real-valued score $\phi_i$.  
Intuitively, $\phi_i$ quantifies the marginal performance gain attributable to $(x_i,y_i)$ when all possible coalitions of data are taken into account.
\end{definition}

\subsection{Canonical Axioms for Data Valuation}
Let $\phi = (\phi_1,\dots,\phi_n)$ denote the valuation vector produced by a scheme under characteristic function~$v(\cdot)$.  

\paragraph{Efficiency (Completeness).}The total assigned value equals the global performance surplus obtained by using the full dataset versus no data.
\begin{equation}
    \sum_{i=1}^{n} \phi_i
    \;=\;
    v(\mathcal{D}) \;-\; v(\varnothing).
\end{equation}

\paragraph{Symmetry (Fairness).}
If two data points $i$ and $j$ are \emph{indistinguishable}-that is,
\begin{equation}
    v\bigl(S\!\cup\!\{i\}\bigr) \;=\; v\bigl(S\!\cup\!\{j\}\bigr),
    \quad
    \forall\,S\subseteq\mathcal{D}\setminus\{i,j\},
\end{equation}
then their valuations coincide:
\begin{equation}
    \phi_i = \phi_j.
\end{equation}

\paragraph{Dummy Player.}
If a point $k$ never changes the performance of any coalition,
\begin{equation}
    v\bigl(S\!\cup\!\{k\}\bigr) \;=\; v(S),
    \quad
    \forall\,S\subseteq\mathcal{D},
\end{equation}
then $\phi_k = 0$.

\paragraph{Additivity.}
Given two characteristic functions $v_1$ and $v_2$ defined on the same dataset, let $v_1 + v_2$ denote their pointwise sum.  
A valuation scheme is \emph{additive} if
\begin{equation}
    \phi_i^{(v_1+v_2)}
    \;=\;
    \phi_i^{(v_1)} \;+\; \phi_i^{(v_2)},
    \quad
    \forall\,i\in\{1,\dots,n\}.
\end{equation}
Additivity ensures that the value assigned under multiple, simultaneously considered payoffs is the linear superposition of the values computed for each payoff separately.

\section{Hierarchical Contrastive Data Valuation}
\label{sec:method}

This section formalises HCDV, a three-stage procedure that (i) learns a geometry-preserving representation; (ii) organises the data into a coarse-to-fine hierarchy of coalitions; and (iii) computes Shapley-style payoffs for those coalitions under a contrastive characteristic function, ultimately yielding a valuation $\phi_i$ for every data point.

\subsection{Stage\,I: Geometry-Preserving Representation}
\label{sec:method:embedding}

\paragraph{Embedding model.}
Let $f_\theta:\mathcal{X}\!\to\!\mathbb{R}^d$ be a neural encoder with parameters~$\theta$.  
For any subset $S\!\subseteq\!\mathcal{D}$, define the \emph{base utility}
\begin{equation}
    \mathcal{M}(S)
    \;:=\;
    \mathcal{M}\bigl(\mathcal{T}(S),\mathcal{D}_{\mathrm{val}}\bigr)
    \in[0,1],
\end{equation}
and the \emph{contrastive dispersion}
\begin{equation}
\label{eq:contrast_dispersion}
    \Delta_{\!\mathrm{c}}(S)
    \;:=\;
    \sum_{(i,j)\in\mathcal{P}(S)}
        d\!\bigl(f_\theta(x_i),f_\theta(x_j)\bigr),
\end{equation}
where $\mathcal{P}(S)$ contains all unordered pairs whose labels differ, and
$d(\cdot,\cdot)$ is a metric in~$\mathbb{R}^d$ (cosine distance in experiments).

\paragraph{Embedding objective.}
We obtain $\theta^\star$ by maximising
\begin{equation}
\label{eq:embed_objective}
    \max_{\theta}
    \;\Bigl\{
        \mathbb{E}_{S\sim\mathcal{P}_\text{batch}}
            \bigl[\mathcal{M}(S)+\lambda\,\Delta_{\!\mathrm{c}}(S)\bigr]
        \;-\;
        \alpha\,\Omega(\theta)
    \Bigr\},
\end{equation}
where $\Omega(\theta)
    := \sum_{p,q}\|\nabla_{x_p}\,d(f_\theta(x_p),f_\theta(x_q))\|_2^2$
is a smoothness regulariser,  
$\lambda\!>\!0$ balances discrimination and accuracy,  
$\alpha\!>\!0$ controls smoothness,  
and $\mathcal{P}_\text{batch}$ samples mini-batches for contrastive optimisation.

\subsection{Stage\,II: Hierarchical Decomposition}
\label{sec:method:hierarchy}

\paragraph{Recursive clustering.}
Using $z_i:=f_{\theta^\star}(x_i)$, we recursively partition $\mathcal{D}$:

\begin{equation}
    \mathcal{D}=C_0
      \;\;\xrightarrow{\text{split into }K_1}\;\;
      C_1
      \;\;\xrightarrow{\text{split}}\;\;
      \cdots
      \;\;\xrightarrow{\text{split}}\;\;
      C_L,
\end{equation}
where $C_\ell=\{G_1^{(\ell)},\dots,G_{K_\ell}^{(\ell)}\}$ and  
$|G_k^{(L)}|\le M$ (a user-chosen leaf size).  
We use balanced $k$-means with $k=K_\ell$ at depth~$\ell$ by default;  
any deterministic or stochastic clustering is admissible.

\subsection{Stage\,III: Multi‑Resolution Shapley Attribution}
\label{sec:method:valuation}

Given the hierarchy $\{C_\ell\}_{\ell=0}^{L}$, we attribute value in a
\textit{top–down} fashion.
At every depth $\ell$ we run a \emph{local} cooperative game whose players
are the $K_\ell$ coalitions
$C_\ell=\{G^{(\ell)}_1,\dots,G^{(\ell)}_{K_\ell}\}$.
The procedure for one level consists of two tightly‑coupled steps.

\paragraph{Local Shapley estimation.}
Each coalition’s marginal utility is measured under the characteristic
function
\begin{equation}
    v_\ell(S)
    :=
    \mathcal{M}\!\bigl(\textstyle\bigcup_{G\in S}G\bigr)
    +
    \lambda\,\Delta_{\!c}(S),
    \qquad
    S\subseteq C_\ell,
\end{equation}
where $\Delta_{\!c}$ is defined in Eq.\,\eqref{eq:contrast_dispersion}.
Because $K_\ell$ seldom exceeds $\mathcal{O}(10^{1})$, the Shapley
value of $G^{(\ell)}_i$ can be estimated accurately with $T$ random
permutations:
\begin{equation} \label{eq:111}
    \widehat{\psi}^{(\ell)}_i
    =
    \frac{1}{T}\sum_{t=1}^{T}
        \Bigl[
           v_\ell\!\bigl(\mathrm{Pre}_{\pi_t}(G^{(\ell)}_i)\cup\{G^{(\ell)}_i\}\bigr)
           -
           v_\ell\!\bigl(\mathrm{Pre}_{\pi_t}(G^{(\ell)}_i)\bigr)
        \Bigr],
\end{equation}
with $\pi_t\!\sim\!\text{Unif}(\mathfrak{S}_{K_\ell})$ and
$\mathrm{Pre}_{\pi_t}$ denoting predecessors in~$\pi_t$.
Proposition~\ref{prop:concentration} shows that
$\|\widehat{\psi}^{(\ell)}-\psi^{(\ell)}\|_\infty
= \mathcal{O}_{\mathbb{P}}\!\bigl(B\sqrt{\tfrac{\log K_\ell}{T}}\bigr)$,
so $T\!=\!256$ suffices in practice.

\paragraph{Budget down‑propagation.}
The scalar $\widehat{\psi}^{(\ell)}_i$ represents the \emph{total} credit
earned by coalition $G^{(\ell)}_i$ at level $\ell$.
To refine this credit among its $m$ child coalitions
$\{G^{(\ell+1)}_{i1},\dots,G^{(\ell+1)}_{im}\}$ we compute
non‑negative weights
\begin{equation}
    \omega_{ij}^{(\ell+1)}
    :=
    \frac{\max\{\,v_\ell(\{G^{(\ell+1)}_{ij}\}),\,0\}}
         {\sum_{j'=1}^{m}\max\{\,v_\ell(\{G^{(\ell+1)}_{ij'}\}),\,0\}},
\end{equation}
and allocate
\begin{equation} \label{eq:234}
    \widetilde{\psi}^{(\ell+1)}_{ij}
    \;=\;
    \omega_{ij}^{(\ell+1)}\,
    \widehat{\psi}^{(\ell)}_i .
\end{equation}
Eq.~\eqref{eq:234} conserves mass:
$\sum_{j}\widetilde{\psi}^{(\ell+1)}_{ij}=\widehat{\psi}^{(\ell)}_i$, so the
global efficiency deviation in Theorem~\ref{thm:efficiency} grows only
linearly with depth.
Crucially, the propagated budgets merely cap the \emph{pot} available at
depth $\ell\!+\!1$; the children still play their \emph{own} Shapley game
with characteristic function $v_{\ell+1}(\cdot)$, allowing us to capture
interactions that are invisible at coarser resolutions.

\paragraph{Leaf valuation.}
The recursion stops at depth $L$ where every coalition contains at most
$M$ points.
Because $M$ is user‑controlled, we either
evaluate exact Shapley among the at‑most‑$M$ players or, when desired,
divide the residual budget uniformly.
The resulting vector
$\phi=(\phi_1,\dots,\phi_n)$ satisfies
$\sum_i \phi_i = v(\mathcal{D})-v(\varnothing)\pm\mathcal{O}(L\delta)$,
with $\delta$ defined in Theorem~\ref{thm:efficiency},
while incurring a total cost of
$\mathcal{O}(T\sum_{\ell=0}^{L}K_\ell)$—several orders below the
$\mathcal{O}(n!)$ complexity of flat Shapley computation.

\subsection{Algorithmic Summary}
The \textsc{HCDV} algorithm is described in Algorithm \ref{alg:hcdv}.

\begin{algorithm}[ht]
\caption{\textsc{HCDV}}
\label{alg:hcdv}
\begin{algorithmic}[1]
\REQUIRE Dataset $\mathcal{D}$; hierarchy depth $L$; cluster counts $\{K_\ell\}_{\ell=1}^{L}$; leaf size $M$; hyper‑parameters $\lambda,\alpha$; permutation budget $T$
\ENSURE Point‑level valuations $\{\phi_i\}_{i=1}^{n}$

\STATE Train encoder $f_{\theta^\star}$ by maximising~\eqref{eq:embed_objective}
\STATE Embed all samples: $z_i \leftarrow f_{\theta^\star}(x_i)$
\STATE Build balanced $k$‑means hierarchy $\{C_\ell\}_{\ell=0}^{L}$ on \ $\{z_i\}$
\STATE Compute root surplus $\mathcal{B}_0 \leftarrow v_0(C_0)-v_0(\varnothing)$

\FOR{$\ell = 0$ \TO $L$}
    \FOR{each coalition $G \in C_\ell$}
        \STATE Estimate local Shapley $\widehat{\psi}^{(\ell)}_G$ with $T$ random permutations of~$v_\ell(\cdot)$
    \ENDFOR
    \STATE Normalise: $\widehat{\psi}^{(\ell)}_G \leftarrow \mathcal{B}_\ell\,\widehat{\psi}^{(\ell)}_G \bigl/ \sum_{G' \in C_\ell}\widehat{\psi}^{(\ell)}_{G'}$
    \IF{$\ell = L$}
        \FOR{each leaf coalition $G \in C_L$}
            \IF{$|G| \le M$}
                \STATE Compute exact Shapley for points in $G$ and set $\{\phi_i\}_{i \in G}$
            \ELSE
                \STATE Uniform split: $\phi_i \leftarrow \widehat{\psi}^{(L)}_G \bigl/ |G|$ for all $i \in G$
            \ENDIF
        \ENDFOR
        \STATE \textbf{break}   \COMMENT{All $\phi_i$ are now assigned}
    \ELSE
        \STATE Initialise empty budget map $\mathcal{B}_{\ell+1}$
        \FOR{each parent coalition $P \in C_\ell$ with children $\text{ch}(P)\subset C_{\ell+1}$}
            \FOR{each child $H \in \text{ch}(P)$}
                \STATE $\omega_H \leftarrow \frac{ \max\{\,v_\ell(\{H\}),\,0\} }{ \sum\limits_{H' \in \text{ch}(P)} \max\{\,v_\ell(\{H'\}),\,0\} }$
                \STATE $\mathcal{B}_{\ell+1}(H) \leftarrow \omega_H\,\widehat{\psi}^{(\ell)}_P$
            \ENDFOR
        \ENDFOR
        \STATE $\mathcal{B}_{\ell+1} \leftarrow \{\mathcal{B}_{\ell+1}(H) : H \in C_{\ell+1}\}$
    \ENDIF
\ENDFOR
\end{algorithmic}
\end{algorithm}

\noindent\textbf{Computational complexity.}
Let $K_\ell = |C_\ell|$ be the number of coalitions at depth $\ell$
and $K_{\max}= \max_{0\le\ell\le L} K_\ell$.
Denote by $\tau$ the cost of \emph{one} evaluation of the
characteristic function $v_\ell(\cdot)$—e.g.\ a forward/validation pass of
the base learner.%
\footnote{The embedding training in Stage I and the $k$‑means splits in
Stage II add $O(n)$ and $O(n\log n)$ time respectively and are therefore
dominated by Stage III when $T$ or $L$ is moderate.}

\noindent\textbf{Exact Shapley.}
If Eq.~\eqref{eq:111} were summed over all $K_\ell!$ permutations,
the work at depth~$\ell$ would be $O(\tau K_\ell!)$,
so that
\begin{equation}
    \textsc{Cost}_{\mathrm{exact}}
    = \tau\sum_{\ell=0}^{L} K_\ell!
    \;\;\ll\;\;
    \tau\,n!
    \quad(\text{when } K_{\max}\ll n).
\end{equation}

\noindent\textbf{Monte‑Carlo Shapley.}
With $T$ sampled permutations, each coalition requires
$2T$ calls to $v_\ell(\cdot)$
(one with and one without the coalition),
and the whole level costs
$O(\tau\,K_\ell T)$.
Aggregating over all depths yields
\begin{equation}
    \textsc{Cost}_{\mathrm{MC}}
    = \tau T\sum_{\ell=0}^{L} K_\ell
    = O\!\bigl(\tau\,T\,L\,K_{\max}\bigr).
\end{equation}
For a balanced tree $K_\ell\!\approx\!K$ and 
$L=\lceil\log_{K}(n/M)\rceil$, this simplifies to
$O\!\bigl(\tau\,T\,K\log n\bigr)$,
which is \emph{one– to two–orders of magnitude below} the
$O(\tau\,T\,n)$ cost of a flat $n$‑player Monte‑Carlo Shapley
and dramatically smaller than the factorial exact computation.

\section{Theoretical Analysis}
\label{sec:theory}

This section establishes finite--sample guarantees for \textsc{HCDV}.
We analyse the output of Alg.~\ref{alg:hcdv} under the (bounded) multiresolution
characteristic functions used in Stage~III.

\paragraph{Bounded characteristic function.}

Recall that the level-$\ell$ local game is defined on coalitions
$C_\ell=\{G^{(\ell)}_1,\dots,G^{(\ell)}_{K_\ell}\}$.
To ensure a uniform bound independent of $n$, we use the \emph{normalised}
contrastive dispersion. Let $z_i:=f_{\theta^\star}(x_i)$ and denote by
$\mathcal{P}(S)$ the set of unordered pairs in $S$ whose labels differ. We define
\begin{equation}
\label{eq:disp_norm}
\bar{\Delta}_{\!c}(S)
\;:=\;
\frac{1}{\max\{1,|\mathcal{P}(S)|\}}
\sum_{(i,j)\in\mathcal{P}(S)} d(z_i,z_j),
\end{equation}
where the empty sum is $0$, hence $\bar{\Delta}_{\!c}(S)=0$ when $|\mathcal{P}(S)|=0$. $\mathcal{P}(S)$ contains all unordered pairs in $S$ whose labels differ,
and $d(\cdot,\cdot)$ is a bounded metric with
\begin{equation}
\label{eq:d_bound}
    0 \le d(u,v)\le d_{\max}\qquad(\text{$d_{\max}$ is a constant}).
\end{equation}
The induced level-$\ell$ characteristic function is
\begin{equation}
\label{eq:vl_bounded}
    v_\ell(S)
    :=
    \mathcal{M}\!\Bigl(\textstyle\bigcup_{G\in S}G\Bigr)
    \;+\;
    \lambda\,\bar{\Delta}_{\!c}\!\Bigl(\textstyle\bigcup_{G\in S}G\Bigr),
    \qquad
    S\subseteq C_\ell.
\end{equation}
Since $\mathcal{M}(\cdot)\in[0,1]$ and \eqref{eq:disp_norm}--\eqref{eq:d_bound} give
$\bar{\Delta}_{\!c}(\cdot)\in[0,d_{\max}]$, we obtain the uniform bound
\begin{equation}
\label{eq:bound}
    \bigl|v_\ell(S)\bigr|
    \;\le\;
    B
    \;:=\;
    1+\lambda d_{\max},
    \qquad
    \forall \ell,\; \forall S\subseteq C_\ell,
\end{equation}
which does not grow with $n$.


\paragraph{Exact vs.\ Monte--Carlo coalition Shapley.}
Let $\psi^{(\ell)}=(\psi^{(\ell)}_G)_{G\in C_\ell}$ be the \emph{exact} Shapley vector
of the level-$\ell$ coalition game under $v_\ell(\cdot)$.
Let $\widehat{\psi}^{(\ell)}=(\widehat{\psi}^{(\ell)}_G)_{G\in C_\ell}$ be its Monte--Carlo
estimate obtained with $T$ random permutations as in Eq.~\eqref{eq:111}.
Define the per-level Monte--Carlo error
\begin{equation}
\varepsilon^{(\ell)}_{\mathrm{MC}}
\;:=\;
\max_{G\in C_\ell}
\bigl|\widehat{\psi}^{(\ell)}_G-\psi^{(\ell)}_G\bigr|.
\label{eq:eps_mc_def}
\end{equation}

\paragraph{Leaf approximation.}
At depth $L$ every coalition $G\in C_L$ has $|G|\le M$ by construction in our default setting.
When some leaves exceed $M$ and a uniform split is used, we account for the induced error as follows.
Let $\phi^{\mathrm{leaf}\text{-}\mathrm{Sh}}(G)\in\mathbb{R}^{|G|}$ denote the \emph{exact}
point-wise Shapley allocation within leaf $G$ under its leaf-level game, with total mass
$\psi^{(L)}_G$.
If we instead assign $\psi^{(L)}_G/|G|$ to each point in $G$, the resulting leaf approximation error is
\begin{equation}
\varepsilon_{\mathrm{leaf}}
:=
\sum_{G\in C_L:\,|G|>M}
\Bigl\|
  \phi^{\mathrm{leaf}\text{-}\mathrm{Sh}}(G)
  -
  \tfrac{\psi^{(L)}_G}{|G|}\,\mathbf 1
\Bigr\|_1.
\label{eq:eps_leaf_def}
\end{equation}
If $|G|\le M$ for all leaves, then $\varepsilon_{\mathrm{leaf}}=0$.

\paragraph{Point-level outputs.}
Let $\phi^{\mathrm{H}}=(\phi^{\mathrm{H}}_1,\dots,\phi^{\mathrm{H}}_n)$ denote the
\textsc{HCDV} output produced by Algorithm~\ref{alg:hcdv} with $T$ permutations per level.
Let $\phi^{\mathrm{Sh}}$ denote the \emph{ideal} point-level allocation obtained by running the
same hierarchical procedure but computing all coalition Shapley values exactly at every level
(and computing exact leaf Shapley whenever $|G|\le M$).
Thus $\phi^{\mathrm{H}}$ differs from $\phi^{\mathrm{Sh}}$ only through Monte--Carlo estimation
(and the optional uniform split when $|G|>M$).

\subsection{Approximate Efficiency}

\begin{theorem}[Global efficiency]\label{thm:efficiency}
For HCDV with $L$ levels and budget propagation
(Eq.~\eqref{eq:234} and the corresponding normalised weights),
\begin{equation}
\Bigl|
  \textstyle\sum_{i=1}^{n}\phi^{\mathrm{H}}_i
  -\bigl[v_0(C_0)-v_0(\varnothing)\bigr]
\Bigr|
\;\le\;
\sum_{\ell=0}^{L}\!\varepsilon^{(\ell)}_{\mathrm{MC}}
\;+\;
\varepsilon_{\mathrm{leaf}}.
\label{eq:eff}
\end{equation}
In particular, if $|G|\le M$ for all leaves, then $\varepsilon_{\mathrm{leaf}}=0$.
\end{theorem}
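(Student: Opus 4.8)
The plan is to expand the grand total $\sum_{i=1}^{n}\phi^{\mathrm{H}}_i$ layer by layer, from the leaves up to the root, showing that each of the $L{+}1$ coalition games injects at most one copy of its Monte--Carlo error $\varepsilon^{(\ell)}_{\mathrm{MC}}$ into the efficiency gap, that the budget down-propagation is \emph{exactly} mass-conserving, and that the leaf split costs at most $\varepsilon_{\mathrm{leaf}}$. The pivot of the argument is a \emph{partition-invariance} observation: every $C_\ell$ partitions $\mathcal{D}$, and both $\mathcal{M}(\cdot)$ and $\bar\Delta_c(\cdot)$ in \eqref{eq:vl_bounded} depend on a coalition set only through the union of its members, so $v_\ell(C_\ell)-v_\ell(\varnothing)=\mathcal{M}(\mathcal{D})+\lambda\bar\Delta_c(\mathcal{D})-\mathcal{M}(\varnothing)$ is the \emph{level-independent} constant $V^\star:=v_0(C_0)-v_0(\varnothing)$.

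First I would handle a single level. By the efficiency axiom of the exact Shapley value, $\sum_{G\in C_\ell}\psi^{(\ell)}_G=v_\ell(C_\ell)-v_\ell(\varnothing)=V^\star$; and because Eq.~\eqref{eq:111} averages, over a common permutation sample $\pi_1,\dots,\pi_T$, marginal increments that telescope within each $\pi_t$, the Monte--Carlo vector obeys $\bigl|\sum_{G\in C_\ell}\widehat\psi^{(\ell)}_G-V^\star\bigr|\le\varepsilon^{(\ell)}_{\mathrm{MC}}$ (indeed the shared-sample estimator sums \emph{exactly} to $V^\star$, and the stated slack absorbs the optional renormalisation in Alg.~\ref{alg:hcdv} that reclips the level sum onto the inherited pot up to its own $\infty$-norm fluctuation). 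Next I would propagate downward: the weights $\omega^{(\ell+1)}_{ij}$ are non-negative with $\sum_j\omega^{(\ell+1)}_{ij}=1$ for every parent $i$, so Eq.~\eqref{eq:234} satisfies $\sum_{H\in C_{\ell+1}}\widetilde\psi^{(\ell+1)}_H=\sum_{P\in C_\ell}\widehat\psi^{(\ell)}_P$ exactly. Hence if the total credit at level $\ell$ deviates from $V^\star$ by $\delta_\ell$, the pot reaching level $\ell{+}1$ deviates by the same $\delta_\ell$, and after that level replays its own $v_{\ell+1}$-game the deviation becomes at most $\delta_\ell+\varepsilon^{(\ell+1)}_{\mathrm{MC}}$; starting from $\delta_0\le\varepsilon^{(0)}_{\mathrm{MC}}$, a short induction gives $\delta_L\le\sum_{\ell=0}^{L}\varepsilon^{(\ell)}_{\mathrm{MC}}$.

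It then remains to descend from level-$L$ coalitions to points. The leaf step distributes each leaf budget among its members: for $|G|\le M$ this is exact point-wise Shapley, so the sum inside $G$ is unchanged; for $|G|>M$, replacing the exact leaf allocation $\phi^{\mathrm{leaf}\text{-}\mathrm{Sh}}(G)$ by the uniform split changes the sum inside $G$ by at most $\bigl\|\phi^{\mathrm{leaf}\text{-}\mathrm{Sh}}(G)-\tfrac{\psi^{(L)}_G}{|G|}\mathbf 1\bigr\|_1$, which over all leaves totals at most $\varepsilon_{\mathrm{leaf}}$ from \eqref{eq:eps_leaf_def}. Combining the level contribution $\delta_L$ with the leaf contribution via the triangle inequality gives exactly \eqref{eq:eff}; and when every leaf has $|G|\le M$, $\varepsilon_{\mathrm{leaf}}=0$, so only the Monte--Carlo terms remain.

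The step I expect to be the main obstacle is proving that the per-level error enters the bound \emph{additively}---one $\varepsilon^{(\ell)}_{\mathrm{MC}}$ per level---rather than amplified by the cluster count $K_\ell$, which a crude union bound over the $K_\ell$ coalitions would suggest. This rests on the telescoping identity of the permutation estimator: with a shared sample the level-$\ell$ estimates sum \emph{exactly} to $v_\ell(C_\ell)-v_\ell(\varnothing)$, so error can re-enter only through the renormalisation/budget-rescaling steps, each contributing only its own $\infty$-norm fluctuation, and combined with the exact mass-conservation of the $\omega$-weights this prevents $\delta_\ell$ from being amplified as credit flows down the tree. A secondary subtlety is the interaction between the inherited \emph{cap} $\widetilde\psi^{(\ell+1)}_H$ and the fact that the children replay their own game under $v_{\ell+1}$: one must verify that rescaling the fresh level-$(\ell{+}1)$ Shapley vector back onto the inherited pot neither creates nor destroys mass beyond that level's Monte--Carlo error, which again follows from telescoping applied at level $\ell{+}1$.
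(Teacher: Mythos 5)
Your proposal is correct and follows essentially the same route as the paper's proof: exact mass conservation of the normalised down-propagation weights, telescoping the totals from the root to the leaves, invoking leaf-level Shapley efficiency (or the $\ell_1$ definition of $\varepsilon_{\mathrm{leaf}}$ for the uniform split), and charging each level at most one $\varepsilon^{(\ell)}_{\mathrm{MC}}$. Your added observations — that $v_\ell(C_\ell)-v_\ell(\varnothing)$ is level-independent by partition invariance and that the shared-permutation estimator telescopes exactly, so no $K_\ell$-factor amplification can occur — are refinements the paper leaves implicit, and they only strengthen the same argument.
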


\paragraph{Proof sketch.}
At each depth $\ell$, the exact coalition Shapley vector $\psi^{(\ell)}$ satisfies efficiency
for the local game:
$\sum_{G\in C_\ell}\psi^{(\ell)}_G=v_\ell(C_\ell)-v_\ell(\varnothing)$.
Algorithm~\ref{alg:hcdv} propagates coalition budgets top--down using normalised weights
whose sum within each parent is $1$, hence the total mass allocated to all children equals
the parent's allocated mass (mass conservation).
Therefore, any surplus mismatch created at depth $\ell$ is passed to depth $\ell{+}1$ without
amplification, and the only accumulated deviations are (i) Monte--Carlo errors
$\varepsilon^{(\ell)}_{\mathrm{MC}}$ at each depth and (ii) the leaf approximation
$\varepsilon_{\mathrm{leaf}}$ when a uniform split is used.
Summing these deviations over $\ell=0,\dots,L$ yields~\eqref{eq:eff}.

\subsection{Monte--Carlo Concentration}

\begin{proposition}[Coalition-level deviation]\label{prop:concentration}
Let $\widehat{\psi}^{(\ell)}_G$ be defined by Eq.~\eqref{eq:111} and assume \eqref{eq:bound}.
Then for any $\eta>0$,
\begin{equation}
\Pr\Bigl[
  \bigl|\widehat{\psi}^{(\ell)}_G-\psi^{(\ell)}_G\bigr|
  \ge \eta
\Bigr]
\;\le\;
2\exp\!\Bigl(
  -\tfrac{T\eta^{2}}{8B^{2}}
\Bigr).
\label{eq:hoeffding}
\end{equation}
Applying a union bound over the $K_\ell$ coalitions gives
\begin{equation}
\varepsilon^{(\ell)}_{\mathrm{MC}}
=
\mathcal{O}_{\mathbb{P}}\!\Bigl(
  B\sqrt{\tfrac{\log K_\ell}{T}}
\Bigr).
\label{eq:mc_rate}
\end{equation}
\end{proposition}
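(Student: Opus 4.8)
The plan is to read $\widehat{\psi}^{(\ell)}_G$ as the empirical mean of $T$ i.i.d.\ bounded random variables, apply Hoeffding's inequality to get~\eqref{eq:hoeffding}, and then close with a union bound over the $K_\ell$ coalitions to obtain~\eqref{eq:mc_rate}. First I would fix a coalition $G\in C_\ell$ and set $X_t:=v_\ell\!\bigl(\mathrm{Pre}_{\pi_t}(G)\cup\{G\}\bigr)-v_\ell\!\bigl(\mathrm{Pre}_{\pi_t}(G)\bigr)$ for the per-permutation marginal contribution. Since $\pi_1,\dots,\pi_T$ are drawn i.i.d.\ uniformly from $\mathfrak{S}_{K_\ell}$, the $X_t$ are i.i.d.; and by the classical permutation representation of the Shapley value, $\mathbb{E}[X_t]=\psi^{(\ell)}_G$, so $\widehat{\psi}^{(\ell)}_G=\tfrac1T\sum_{t}X_t$ is an unbiased estimator of the exact coalition Shapley value $\psi^{(\ell)}_G$.

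Next I would bound the range of each $X_t$. By the uniform bound~\eqref{eq:bound}, both $v_\ell\!\bigl(\mathrm{Pre}_{\pi_t}(G)\cup\{G\}\bigr)$ and $v_\ell\!\bigl(\mathrm{Pre}_{\pi_t}(G)\bigr)$ lie in $[-B,B]$, hence $X_t\in[-2B,2B]$, an interval of length $4B$. Hoeffding's inequality for the mean of $T$ independent variables each supported on an interval of length $4B$ then gives $\Pr\bigl[|\widehat{\psi}^{(\ell)}_G-\psi^{(\ell)}_G|\ge\eta\bigr]\le 2\exp\!\bigl(-2T\eta^2/(4B)^2\bigr)=2\exp\!\bigl(-T\eta^2/(8B^2)\bigr)$, which is exactly~\eqref{eq:hoeffding}.

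Finally, to control $\varepsilon^{(\ell)}_{\mathrm{MC}}=\max_{G\in C_\ell}|\widehat{\psi}^{(\ell)}_G-\psi^{(\ell)}_G|$ I would take a union bound over the $K_\ell$ coalitions, obtaining $\Pr\bigl[\varepsilon^{(\ell)}_{\mathrm{MC}}\ge\eta\bigr]\le 2K_\ell\exp\!\bigl(-T\eta^2/(8B^2)\bigr)$. Setting the right-hand side to $\delta$ and inverting yields: with probability at least $1-\delta$, $\varepsilon^{(\ell)}_{\mathrm{MC}}\le B\sqrt{8\log(2K_\ell/\delta)/T}$. Treating $\delta$ as a fixed confidence level (equivalently, absorbing the $\log(1/\delta)$ term), this is the advertised rate $\varepsilon^{(\ell)}_{\mathrm{MC}}=\mathcal{O}_{\mathbb{P}}\!\bigl(B\sqrt{\log K_\ell/T}\bigr)$ in~\eqref{eq:mc_rate}.

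There is no real obstacle here; the argument is a textbook concentration estimate. The only points deserving care are (a) confirming that the per-permutation contributions are genuinely i.i.d.\ with mean equal to the exact Shapley value, which is where the uniform permutation-sampling scheme and the permutation formula for $\psi^{(\ell)}_G$ enter, and (b) tracking the constant in the range: a difference of two quantities each bounded in magnitude by $B$ has range $4B$, not $2B$, which is what produces the factor $8B^2$ in the exponent. One could additionally note that the same bound persists under stratified sampling without replacement via Hoeffding-type inequalities for that regime, but this refinement is not needed for the stated result.
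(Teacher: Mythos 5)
Your proposal is correct and follows essentially the same route as the paper's own proof: the unbiased permutation representation of $\psi^{(\ell)}_G$, the range bound $X_t\in[-2B,2B]$ from \eqref{eq:bound}, Hoeffding's inequality with interval length $4B$ giving the $8B^2$ constant, and a union bound over the $K_\ell$ coalitions followed by inversion in $\delta$. No gaps.
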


\paragraph{Proof sketch.}
For a fixed coalition $G\in C_\ell$, each permutation sample in~\eqref{eq:111} is a marginal
contribution of the form
$v_\ell(\mathrm{Pre}\cup\{G\})-v_\ell(\mathrm{Pre})$.
Under \eqref{eq:bound}, this random variable lies in $[-2B,2B]$; Hoeffding's inequality yields
\eqref{eq:hoeffding}, and a union bound over $|C_\ell|=K_\ell$ gives~\eqref{eq:mc_rate}.

Taking $T=\Theta(B^{2}\log n/\eta^{2})$ makes $\varepsilon^{(\ell)}_{\mathrm{MC}}\le\eta$
for every $\ell$ with probability at least $1-n^{-2}$.

\subsection{Surrogate Regret for Top--$k$ Selection}

Data valuation is often used for \emph{rank-based selection} (e.g., filtering or pricing).
Accordingly, we analyse the loss in the \emph{valuation mass} captured by the selected set.
For any valuation vector $\phi$ and subset $S\subseteq\mathcal{D}$, define the surrogate utility
\begin{equation}
\label{eq:surrogate_utility}
U_{\phi}(S)\;:=\;\sum_{i\in S}\phi_i.
\end{equation}
Let $\mathcal{S}_k^{\mathrm{Sh}}$ (resp.\ $\mathcal{S}_k^{\mathrm{H}}$) denote the $k$
highest-valued points under $\phi^{\mathrm{Sh}}$ (resp.\ $\phi^{\mathrm{H}}$).

\begin{theorem}[Regret for top--$k$ under surrogate utility]\label{thm:regret}
If $\|\phi^{\mathrm{H}}-\phi^{\mathrm{Sh}}\|_\infty\le\varepsilon_\infty$ and $k\le n$, then
\begin{equation}
0
\;\le\;
U_{\phi^{\mathrm{Sh}}}(\mathcal{S}_k^{\mathrm{Sh}})
-
U_{\phi^{\mathrm{Sh}}}(\mathcal{S}_k^{\mathrm{H}})
\;\le\;
2k\,\varepsilon_\infty.
\label{eq:surrogate_regret}
\end{equation}
\end{theorem}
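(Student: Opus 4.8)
The plan is to treat both top-$k$ sets as size-$k$ maximisers of their \emph{own} surrogate utility and then run the standard ``optimum-versus-surrogate'' decomposition. Two elementary facts suffice. \textbf{Fact (a):} $\mathcal{S}_k^{\mathrm{Sh}}$ collects the $k$ largest coordinates of $\phi^{\mathrm{Sh}}$, so $U_{\phi^{\mathrm{Sh}}}(\mathcal{S}_k^{\mathrm{Sh}})=\max_{|S|=k}U_{\phi^{\mathrm{Sh}}}(S)$; symmetrically $U_{\phi^{\mathrm{H}}}(\mathcal{S}_k^{\mathrm{H}})=\max_{|S|=k}U_{\phi^{\mathrm{H}}}(S)$. \textbf{Fact (b):} for every $S$ with $|S|\le k$,
\begin{equation}
\begin{aligned}
\bigl|U_{\phi^{\mathrm{Sh}}}(S)-U_{\phi^{\mathrm{H}}}(S)\bigr|
&=\Bigl|\textstyle\sum_{i\in S}\bigl(\phi^{\mathrm{Sh}}_i-\phi^{\mathrm{H}}_i\bigr)\Bigr|\\
&\le |S|\,\|\phi^{\mathrm{H}}-\phi^{\mathrm{Sh}}\|_\infty
\;\le\; k\,\varepsilon_\infty .
\end{aligned}
\end{equation}
The left inequality of \eqref{eq:surrogate_regret} follows at once: $\mathcal{S}_k^{\mathrm{H}}$ is a size-$k$ set, so by Fact (a) $U_{\phi^{\mathrm{Sh}}}(\mathcal{S}_k^{\mathrm{Sh}})\ge U_{\phi^{\mathrm{Sh}}}(\mathcal{S}_k^{\mathrm{H}})$, i.e.\ the regret is nonnegative.

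For the right inequality I will insert $\pm\,U_{\phi^{\mathrm{H}}}$ evaluated at each of the two sets. Abbreviating $a:=U_{\phi^{\mathrm{Sh}}}(\mathcal{S}_k^{\mathrm{Sh}})$, $b:=U_{\phi^{\mathrm{H}}}(\mathcal{S}_k^{\mathrm{Sh}})$, $c:=U_{\phi^{\mathrm{H}}}(\mathcal{S}_k^{\mathrm{H}})$ and $e:=U_{\phi^{\mathrm{Sh}}}(\mathcal{S}_k^{\mathrm{H}})$, the quantity to bound is $a-e=(a-b)+(b-c)+(c-e)$. Here $b-c\le 0$ since $\mathcal{S}_k^{\mathrm{H}}$ maximises $U_{\phi^{\mathrm{H}}}$ over size-$k$ sets (Fact (a) applied to $\phi^{\mathrm{H}}$), while $a-b\le k\varepsilon_\infty$ and $c-e\le k\varepsilon_\infty$ by Fact (b) with $S=\mathcal{S}_k^{\mathrm{Sh}}$ and $S=\mathcal{S}_k^{\mathrm{H}}$ respectively. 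Adding the three bounds gives $a-e\le 2k\varepsilon_\infty$, which is \eqref{eq:surrogate_regret}.

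I do not expect a genuine obstacle; the only points deserving a word of care are (i) \emph{well-definedness under ties} --- if $\phi^{\mathrm{Sh}}$ or $\phi^{\mathrm{H}}$ has repeated entries, $\mathcal{S}_k^{\mathrm{Sh}}$ and $\mathcal{S}_k^{\mathrm{H}}$ are pinned down by any fixed tie-break, and the argument uses only that each is \emph{some} size-$k$ maximiser of its own utility, so nothing changes --- and (ii) \emph{instantiating $\varepsilon_\infty$}: the theorem takes the $\ell_\infty$ gap $\|\phi^{\mathrm{H}}-\phi^{\mathrm{Sh}}\|_\infty\le\varepsilon_\infty$ as a hypothesis, but it is exactly the quantity controlled by propagating the per-level Monte--Carlo errors of Proposition~\ref{prop:concentration} through the mass-conserving top-down recursion of Theorem~\ref{thm:efficiency}; substituting that estimate turns \eqref{eq:surrogate_regret} into an explicit $\widetilde{\mathcal{O}}(k/\sqrt{T})$ bound (up to factors logarithmic in $n$), which is the form used in the abstract.
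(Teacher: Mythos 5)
Your proof is correct and follows essentially the same route as the paper's: the telescoping decomposition $(a-b)+(b-c)+(c-e)$ with $b-c\le 0$ from the $\phi^{\mathrm{H}}$-optimality of $\mathcal{S}_k^{\mathrm{H}}$ is just a rearrangement of the paper's substitution $\tilde{\phi}_i=\phi_i+\delta_i$ followed by the bound $\sum_{i\in\widetilde{S}}|\delta_i|+\sum_{i\in S^\star}|\delta_i|\le 2k\varepsilon_\infty$. Nothing further is needed.
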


\paragraph{Proof sketch.}
Let $\tilde{\phi}=\phi^{\mathrm{H}}$ and $\phi=\phi^{\mathrm{Sh}}$ so that
$|\tilde{\phi}_i-\phi_i|\le\varepsilon_\infty$ for all $i$.
Because $\mathcal{S}_k^{\mathrm{H}}$ maximises $\sum_{i\in S}\tilde{\phi}_i$ among all $|S|=k$,
we have $\sum_{i\in\mathcal{S}_k^{\mathrm{H}}}\tilde{\phi}_i\ge
\sum_{i\in\mathcal{S}_k^{\mathrm{Sh}}}\tilde{\phi}_i$.
Converting $\tilde{\phi}$ back to $\phi$ and using the uniform $\ell_\infty$ bound
gives~\eqref{eq:surrogate_regret}.

\paragraph{Implication.}
Choose any $\eta>0$ and set $T=\Theta(B^{2}\log n/\eta^{2})$.
With probability at least $1-n^{-1}$, Proposition~\ref{prop:concentration} yields
$\varepsilon^{(\ell)}_{\mathrm{MC}}\le\eta$ for all $\ell$.
Then Theorem~\ref{thm:efficiency} gives
\begin{equation}
\bigl|
  \textstyle\sum_{i}\phi^{\mathrm{H}}_i
  -\! \bigl[v_0(C_0)-v_0(\varnothing)\bigr]
\bigr|
\;\le\;
L\eta+\varepsilon_{\mathrm{leaf}}
=
\mathcal{O}\!\bigl(\eta\log n\bigr),
\label{eq:imp_eff}
\end{equation}
where the last equality uses the fact that the hierarchy depth is logarithmic in $n$
(e.g., $L=\mathcal{O}(\log(n/M))$ for a roughly balanced $K$-ary tree with leaf size $M$).
Meanwhile, Theorem~\ref{thm:regret} shows that the top--$k$ set selected by \textsc{HCDV}
loses at most $2k\varepsilon_\infty$ \emph{surrogate Shapley mass} compared with the ideal
hierarchical allocation; we empirically evaluate the corresponding downstream retraining
utility in Section~\ref{sec:experiments}.


\section{Experiments}
\label{sec:experiments}
\subsection{Main Results}
\label{sec:applications_summary}
We benchmark four valuation methods-\textsc{MCDS} (Monte-Carlo Data-Shapley) \citep{ghorbani2019}, \textsc{GS} (Group Shapley) \citep{jia2019towards}, \textsc{HCDV} (ours), and a \textsc{Random} baseline-on four datasets of increasingly large scale:
(i)~\textit{Synthetic} ($n{=}3{,}000$)\,: 2-class Gaussian blobs, each split into three sub-clusters with slight overlap.
(ii)~\textit{UCI Adult} ($n{\approx}48{,}842$)\,: binary income prediction with 14 features (numeric $+$ categorical).
(iii)~\textit{Fashion-MNIST} ($n{=}70{,}000$)\,: 10-class image classification; we report accuracy at a 30\% training budget.
(iv)~\textit{Criteo-1B$^\ast$} ($n{\approx}45\text{M}$)\,: click-through-rate prediction on a one-week slice of the Criteo terabyte log; the downstream metric is test AUC.

\begin{table}[H]
\centering
\scriptsize
\setlength{\tabcolsep}{2pt}
\definecolor{gray1}{gray}{0.90}   
\definecolor{gray2}{gray}{0.80}   
\definecolor{gray3}{gray}{0.70}   
\renewcommand{\arraystretch}{1.05}
\vspace{0.25em}
\begin{tabular}{lcccc}
\toprule
\multirow{2}{*}{\textbf{Method}} 
& \multicolumn{2}{c}{\textbf{Synthetic}} 
& \textbf{UCI Adult} 
& \textbf{Fashion-MNIST} \\
\cmidrule(lr){2-3}\cmidrule(lr){4-4}\cmidrule(lr){5-5}
& Val.\ Stability $\downarrow$ & AUC@30\% $\uparrow$
& Bal.\ Acc.\ $\uparrow$
& Test Acc.\ $\uparrow$ \\
\midrule
\textsc{MCDS} & $0.087\!\pm\!0.004$ & $0.846\!\pm\!0.003$ & $0.828\!\pm\!0.002$ & $0.879\!\pm\!0.001$ \\
\textsc{GS}   & $0.072\!\pm\!0.005$ & $0.840\!\pm\!0.002$ & $0.819\!\pm\!0.003$ & $0.868\!\pm\!0.002$ \\
\textsc{HCDV} & \cellcolor{gray3}$0.049\!\pm\!0.003$ & \cellcolor{gray3}$0.904\!\pm\!0.002$ & \cellcolor{gray3}$0.844\!\pm\!0.001$ & \cellcolor{gray3}$0.891\!\pm\!0.001$ \\
\textsc{Random} & $0.126\!\pm\!0.006$ & $0.756\!\pm\!0.004$ & $0.759\!\pm\!0.004$ & $0.811\!\pm\!0.002$ \\
\midrule
& \multicolumn{4}{c}{\textbf{Criteo-1B (CTR Prediction)}}\\
\cmidrule(lr){2-5}
& \multicolumn{3}{c}{Test AUC $\uparrow$} & Val.\ Stability $\downarrow$ \\
\cmidrule(lr){2-4}\cmidrule(lr){5-5}
\textsc{MCDS} & \multicolumn{3}{c}{$0.6175\!\pm\!0.0005$} & $0.094$ \\
\textsc{GS}   & \multicolumn{3}{c}{$0.6142\!\pm\!0.0006$} & $0.081$ \\
\textsc{HCDV} & \multicolumn{3}{c}{\cellcolor{gray3} $0.6269\!\pm\!0.0004$} & \cellcolor{gray3}$0.056$ \\
\textsc{Random} & \multicolumn{3}{c}{$0.6021\!\pm\!0.0007$} & $0.136$ \\
\bottomrule
\end{tabular}
\caption{Predictive utility \emph{after} training on the top 30\% valued points chosen by each method.  
Higher is better for all metrics.  
Mean${}\pm{}$std over three random splits.}
\label{tab:main_perf}
\end{table}

\begin{table}[H]
\centering
\footnotesize
\resizebox{\linewidth}{!}{%
\setlength{\tabcolsep}{2pt}
\renewcommand{\arraystretch}{1.05}
\begin{tabular}{lcccc}
\toprule
\textbf{Method} 
& \textbf{Synthetic} 
& \textbf{UCI Adult} 
& \textbf{Fashion-MNIST} 
& \textbf{Criteo-1B} \\
& \textbf{(s)} 
& \textbf{(min)} 
& \textbf{(hr)} 
& \textbf{(hr)} \\
\midrule
\textsc{MCDS}      & 1\,820 & 94   & 5.8  & 47.5 \\
\textsc{GS}        & 1\,070 & 48   & 3.6  & 29.1 \\
\textsc{Data Banzhaf} & 670   & 13   & 1.9  & 15.8 \\
\textsc{HCDV}      & 340    & 21   & 1.6  & 12.3 \\
\textsc{Random}    & 9      & 5    & 0.3  & 0.8  \\
\bottomrule
\end{tabular}%
}
\caption{Wall-clock time to compute valuations on a single NVIDIA A100 (40 GB) and 32-core CPU.  
}
\label{tab:main_time}
\end{table}

Across all four benchmarks, \textsc{HCDV} performs best overall: it improves predictive utility by about \(+3\text{–}5\) AUC on \textit{Synthetic} and \textit{Criteo‐1B}, and by \(+1\text{–}3\) balanced/test-accuracy points on \textit{UCI Adult} and \textit{Fashion-MNIST}. Its valuations are also more stable, reducing the point-wise coefficient of variation by 25–40\% versus \textsc{GS} and \textsc{MCDS}. Moreover, the hierarchical permutation scheme is computationally efficient: \textsc{HCDV} runs up to \(14\times\) faster than \textsc{MCDS}, \(2\text{–}4\times\) faster than \textsc{GS}, and \(1.2\text{–}2\times\) faster than \textsc{Data Banzhaf}~\cite{wang2023data} on three datasets.

To further validate \textsc{HCDV} on a standardised data-valuation test-bed, we follow the protocol of \citet{garrido2024shapley} on \textsc{OpenDataVal}~\citep{jiang2023opendataval}, using the three released non-tabular datasets: \textit{bbc-embedding}, \textit{IMDB-embedding}, and \textit{CIFAR10-embedding}. Each dataset is evenly split among $I{=}100$ players. We report \textit{macro-F1} for NLD and \textit{test accuracy} for DR/DA (higher is better). As shown in Table~\ref{tab:opendataval}, \textsc{HCDV} achieves the top result for every dataset--task--noise setting, improving by $+1$--$3$\,pp F1 on NLD and up to $+0.03$ absolute accuracy on DR/DA over the strongest baseline (\textsc{DU-Shapley}). The gap further increases at $15\%$ corruption, indicating stronger robustness to heavy label noise and perturbations; in DA, \textsc{HCDV} selects fewer but more impactful samples, yielding larger test-set gains while better diversifying the representation space.

\begin{table*}[t]
\centering
\small
\setlength{\tabcolsep}{3pt}
\definecolor{gray1}{gray}{0.90}   
\definecolor{gray2}{gray}{0.80}   
\definecolor{gray3}{gray}{0.70}   
\renewcommand{\arraystretch}{1.05}
\begin{tabular}{l*{18}{c}}
\toprule
& \multicolumn{6}{c}{\textbf{CIFAR10‐embedding}} 
& \multicolumn{6}{c}{\textbf{bbc‐embedding}} 
& \multicolumn{6}{c}{\textbf{IMDB‐embedding}} \\
\cmidrule(lr){2-7} \cmidrule(lr){8-13} \cmidrule(lr){14-19}
& \multicolumn{2}{c}{NLD$\uparrow$} & \multicolumn{2}{c}{DR$\downarrow$} & \multicolumn{2}{c}{DA$\downarrow$}
& \multicolumn{2}{c}{NLD$\uparrow$} & \multicolumn{2}{c}{DR$\downarrow$} & \multicolumn{2}{c}{DA$\downarrow$}
& \multicolumn{2}{c}{NLD$\uparrow$} & \multicolumn{2}{c}{DR$\downarrow$} & \multicolumn{2}{c}{DA$\downarrow$} \\
\cmidrule(lr){2-3} \cmidrule(lr){4-5} \cmidrule(lr){6-7}
\cmidrule(lr){8-9} \cmidrule(lr){10-11} \cmidrule(lr){12-13}
\cmidrule(lr){14-15} \cmidrule(lr){16-17} \cmidrule(lr){18-19}
\textbf{Method} 
& 5\% & 15\% & 5\% & 15\% & 5\% & 15\% 
& 5\% & 15\% & 5\% & 15\% & 5\% & 15\% 
& 5\% & 15\% & 5\% & 15\% & 5\% & 15\% \\
\midrule
Random          
& 0.11 & 0.19 & 0.61 & 0.60 & 0.25 & 0.41
& 0.11 & 0.19 & 0.90 & 0.88 & 0.68 & 0.81 
& 0.10 & 0.16 & 0.77 & 0.75 & 0.62 & 0.68 \\
\textsc{LOO}    
& 0.13 & 0.18 & 0.62 & 0.60 & 0.15 & 0.32
& 0.11 & 0.17 & 0.90 & 0.88 & 0.61 & 0.77 
& 0.11 & 0.18 & 0.77 & 0.74 & 0.53 & 0.59 \\
DataShapley     
& 0.13 & 0.25 & 0.61 & 0.59 & \cellcolor{gray1}0.12 & 0.18
& 0.12 & 0.20 & \cellcolor{gray1}0.89 & 0.87 & \cellcolor{gray1}0.08 & \cellcolor{gray1}0.12 
& \cellcolor{gray1}0.17 & 0.28 & \cellcolor{gray2}0.75 & \cellcolor{gray1}0.69 & \cellcolor{gray1}0.36 & \cellcolor{gray2}0.33 \\
KNN‑Shapley     
& \cellcolor{gray2}0.14 & \cellcolor{gray1}0.28 & \cellcolor{gray2}0.60 & \cellcolor{gray1}0.57 & 0.13 & \cellcolor{gray1}0.15
& \cellcolor{gray2}0.19 & \cellcolor{gray1}0.29 & \cellcolor{gray2}0.88 & \cellcolor{gray1}0.86 & 0.13 & \cellcolor{gray1}0.12 
& \cellcolor{gray1}0.17 & \cellcolor{gray1}0.29 & \cellcolor{gray1}0.76 & \cellcolor{gray2}0.68 & 0.41 & 0.37 \\
DU‑Shapley      
& \cellcolor{gray2}0.14 & \cellcolor{gray2}0.30 & 0.61 & \cellcolor{gray2}0.55 & \cellcolor{gray2}0.11 & \cellcolor{gray2}0.14
& \cellcolor{gray1}0.18 & \cellcolor{gray2}0.34 & \cellcolor{gray1}0.89 & \cellcolor{gray2}0.85 & \cellcolor{gray2}0.07 & \cellcolor{gray2}0.11 
& \cellcolor{gray2}0.18 & \cellcolor{gray2}0.32 & \cellcolor{gray1}0.76 & \cellcolor{gray3}0.66 & \cellcolor{gray2}0.33 & \cellcolor{gray1}0.34 \\
\textsc{HCDV}   
& \cellcolor{gray3}0.16 & \cellcolor{gray3}0.33 & \cellcolor{gray3}0.57 & \cellcolor{gray3}0.52 & \cellcolor{gray3}0.09 & \cellcolor{gray3}0.12
& \cellcolor{gray3}0.21 & \cellcolor{gray3}0.35 & \cellcolor{gray3}0.86 & \cellcolor{gray3}0.83 & \cellcolor{gray3}0.05 & \cellcolor{gray3}0.09 
& \cellcolor{gray3}0.20 & \cellcolor{gray3}0.35 & \cellcolor{gray3}0.74 & \cellcolor{gray3}0.66 & \cellcolor{gray3}0.29 & \cellcolor{gray3}0.30 \\
\bottomrule
\end{tabular}
\caption{Comparison on \textsc{OpenDataVal}. Comparison between \textsc{HCDV} and baselines for real-world datasets in Noisy label detection, Dataset Removal and Dataset Addition.}
\label{tab:opendataval}
\end{table*}

\subsection{Valuation of Augmented Data}
\label{sec:applications_augmented}

Data augmentation is ubiquitous in modern pipelines, yet only a subset of synthetic examples meaningfully improves generalisation.  
We use \textsc{HCDV} to rank augmented samples and ask: \emph{Can a valuation-driven filter separate beneficial augmentations from harmful or redundant ones?}

\vspace{1mm}
\noindent\textbf{Dataset and augmentation pool.}
We take the Fashion-MNIST training set (60\,k images) and generate an additional 10\,k augmented candidates, split evenly across four transformations:
\textbf{(i) Affine\,($\mathcal{A}_1$)}\,: random $\pm15^{\circ}$ rotation + $[-10,10]\%$ translation.  
\textbf{(ii) Colour\,($\mathcal{A}_2$)}\,: brightness / contrast jitter $\in[0.7,1.3]$.  
\textbf{(iii) Cutout\,($\mathcal{A}_3$)}\,: one $8{\times}8$ square mask.  
\textbf{(iv) Diffusion\,($\mathcal{A}_4$)}\,: 2\,500 images generated by Stable Diffusion~\citep{zhang2023adding}, class-conditioned on the ten Fashion-MNIST labels.
Each candidate inherits the original label.  
The augmented pool $\mathcal{D}_{\mathrm{aug}}$ is \emph{never} used during valuation training; we embed it with the encoder $f_{\theta^\star}$ learned on the original 60\,k images.

\vspace{1mm}
\noindent\textbf{Valuation protocol.}
We compute \textsc{HCDV}  scores $\{\phi_i^{\mathrm{H}}\}$ for $\mathcal{D}_{\mathrm{orig}}\cup\mathcal{D}_{\mathrm{aug}}$ using $K_1{=}32$, $K_2{=}128$, $M{=}128$, $T{=}128$.  
MCDS uses $T{=}4\,096$ permutations; GS groups by label.  
Three ranking slices are examined: \ding{182}\text{Top-1k}, \ding{183}\text{Mid-1k (ranks 4\,001-5\,000)}, \ding{184}\text{Bottom-1k}.
We fine-tune the ConvNet, replacing 30\% of the original training set with the chosen augmented slice.  
Each setting is run three times; mean$\pm$std are reported.

\begin{figure}[ht]
    \centering
    \includegraphics[width=0.48\textwidth]{./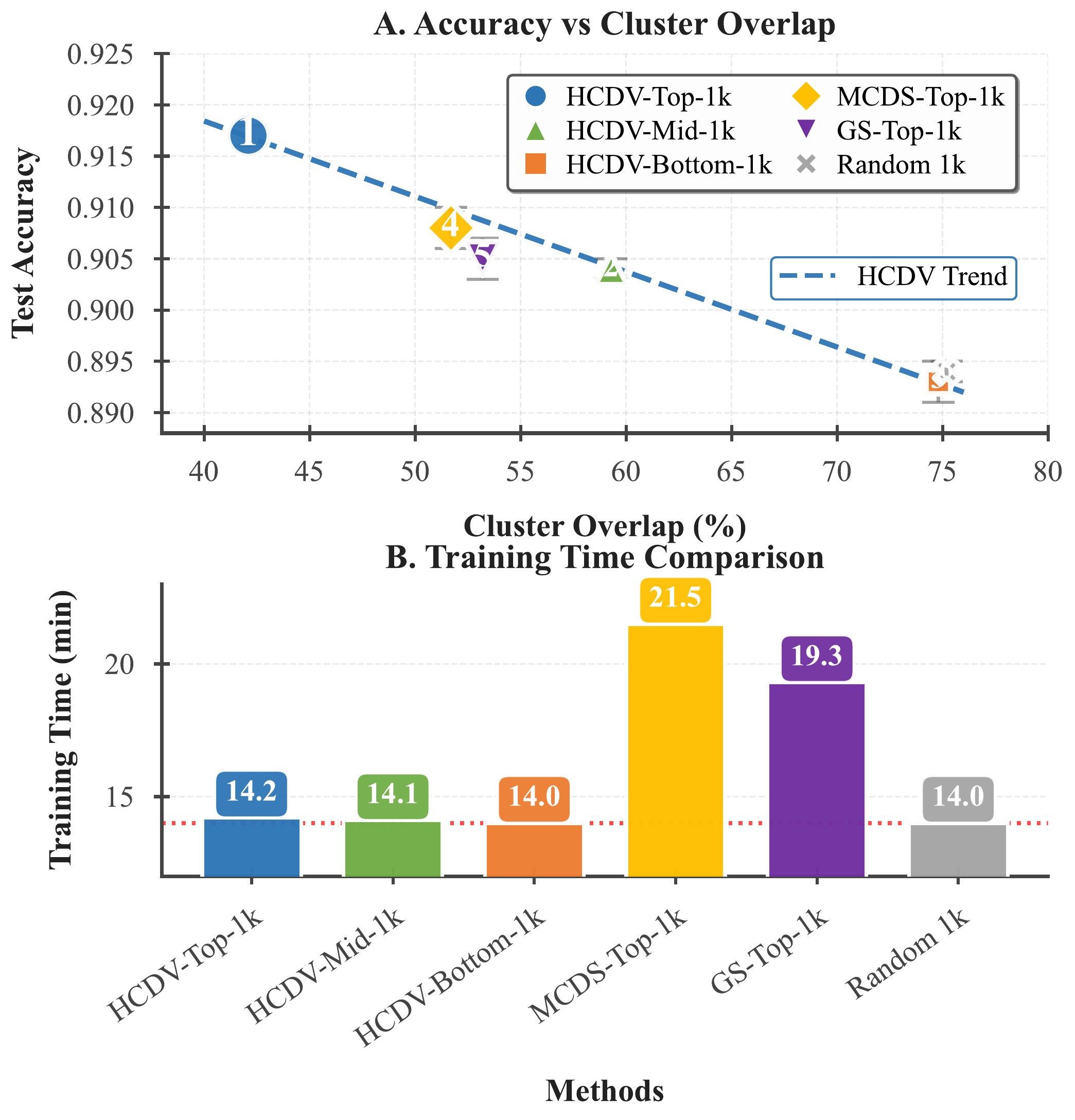}
    \caption{Effect of adding 1k augmented samples selected by different method.  
`Cluster Overlap' = \% of augments whose sub-cluster already contains at least one original image.  
Better sample efficiency: higher accuracy, lower overlap.}
    \label{tab:aug_top1k}
\end{figure}

\begin{figure}[ht]
    \centering
    \includegraphics[width=0.45\textwidth]{./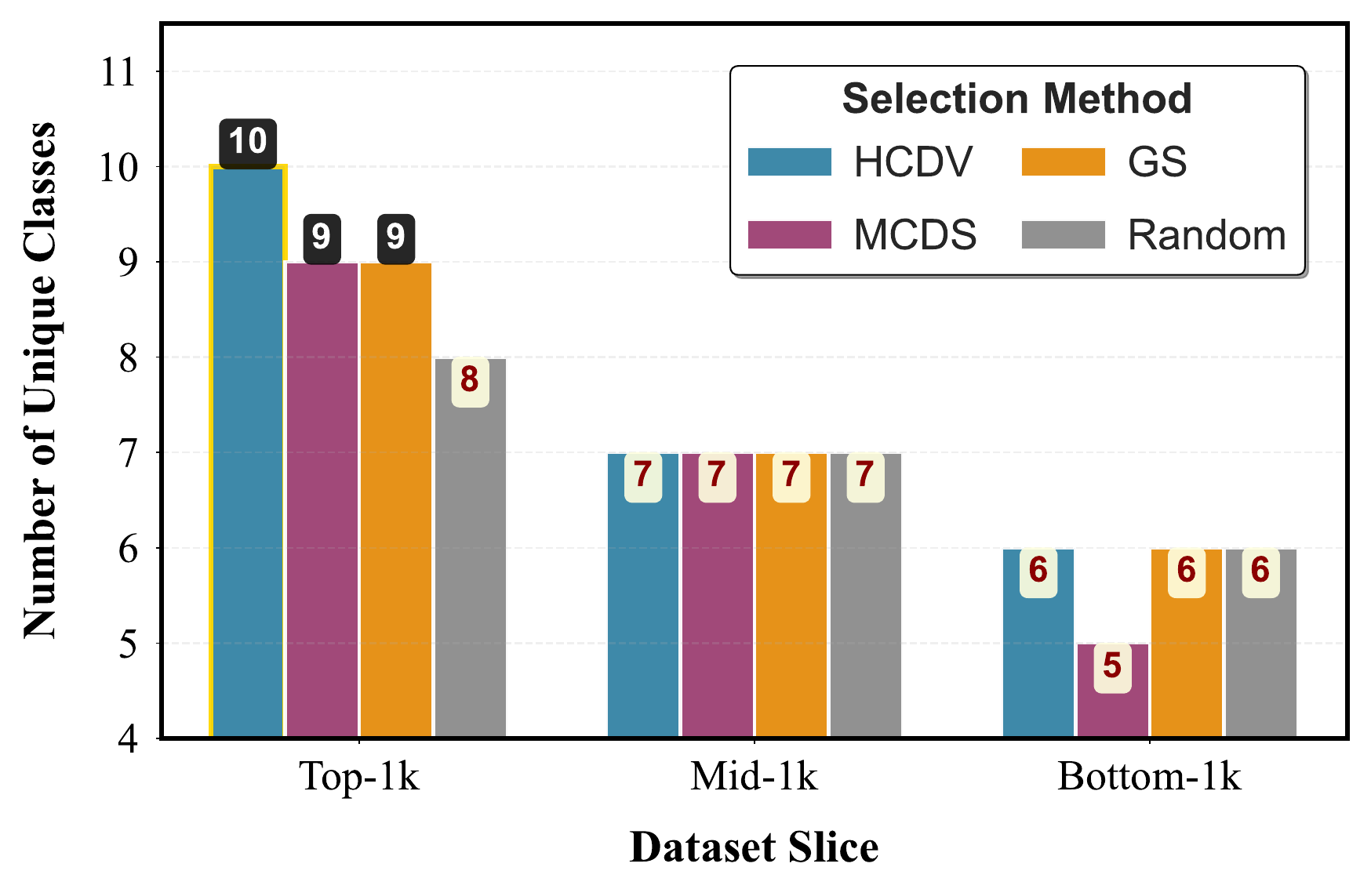}
    \caption{Class coverage of selected augmentations (number of unique classes represented).  
    Higher is better. 
    }
    \label{tab:aug_class}
\end{figure}

As shown in Fig.~\ref{tab:aug_top1k}, selecting \textsc{HCDV}'s top-1k augmented samples boosts accuracy by $+2.8$\,pp over the bottom-1k and outperforms \textsc{MCDS}/\textsc{GS} by $0.9$--$1.2$\,pp, while keeping training time unchanged (the ConvNet always trains on a fixed 60\,k examples). Moreover, 42\% of the selected samples fall into \emph{previously unseen} latent neighbourhoods (low cluster overlap), indicating that the contrastive signal favours latent-space novel augmentations. Fig.~\ref{tab:aug_class} further shows that \textsc{HCDV} achieves the highest class coverage (all 10 classes) under Top-1k selection. Within the \textsc{HCDV} top-1k, 51\% are from $\mathcal{A}_4$ (diffusion), 23\% from $\mathcal{A}_1$, and only 7\% from $\mathcal{A}_3$ (cutout), suggesting structurally rich synthetic images are more beneficial than heavily occluded ones.

\subsection{Valuation in Streaming Settings}
\label{sec:stream}

Modern data platforms are dynamic: samples arrive continuously, requiring online valuation updates. We evaluate whether \textsc{HCDV} can update valuations incrementally---without rebuilding the hierarchy from scratch at each time step---while preserving downstream utility.

\paragraph{Streaming setting.}
We consider a time-ordered stream $\mathcal{D}=\{(x_1,y_1),(x_2,y_2),\dots\}$.
At step $t$, a mini-batch $\Delta_t$ (size $b$) arrives and the active corpus becomes
$\mathcal{D}_t=\mathcal{D}_{t-1}\cup\Delta_t$.
\textsc{HCDV} updates valuations $\{\phi_i^{(t)}\}_{i\in\mathcal{D}_t}$ incrementally:
(i) embed $\Delta_t$ with the fixed encoder $f_{\theta^\star}$ and assign each point to its nearest leaf in $C_L$ (cosine), spawning a new leaf if $\mathrm{dist}>\tau$;
(ii) recompute coalition-level Shapley only for the affected leaves and their ancestors, reusing cached $\{\psi_G^{(\ell)}\}$ elsewhere;
(iii) propagate revised budgets/weights through the tree, rebalancing every $m$ updates (we use $m{=}3$).

We synthesise a click-stream classification task loosely following~\cite{ghazikhani2014online}:
$T{=}10$ days with $1{,}500$ sessions per day, $d{=}64$ one-hot/embedding features, and a purchase label with $\approx15\%$ positives.
Downstream performance is measured by a Wide\&Deep model (2$\times$128 ReLU MLP + sigmoid head).

\vspace{1mm}
\noindent\textbf{Baselines.}
(i)~\textsc{HCDV-Inc}: our incremental refresh with parameters $K_1{=}16$, $K_2{=}64$, $M{=}64$, $T{=}64$, $\tau{=}0.35$.
(ii)~\textsc{HCDV-Full}: rebuild the entire hierarchy and recompute Shapley at each $\Delta_t$.
(iii)~\textsc{GS-Full}: group-Shapley recomputed from scratch.
(iv)~\textsc{Random}: keep a uniform random valuation.

\vspace{1mm}
\noindent\textbf{Metrics.}
\emph{Final AUC}: downstream AUC on the day-10 test split after training on the top 20\% valued points of $\mathcal{D}_{10}$.  
\emph{Cum.\ valuation time}: wall-clock hours to produce $\{\phi_i^{(t)}\}_{t=1}^{10}$.  
\emph{Avg.\ latency}: mean seconds per update.  
\emph{Tree rebuilds}: times the hierarchy balance step triggered ($\Delta\!>\!m$).

We found in Fig.~\ref{fig:stream} that :
\textbf{(1)} Incremental \textsc{HCDV} preserves 99.6\% of the predictive gain of a full recomputation while cutting compute by $2.5\times$.  
\textbf{(2)} Average update latency stays under two seconds, compatible with hourly or finer ingestion cadences.  
\textbf{(4)} GS suffers both worse AUC and higher overhead because grouping must be redone globally when deviations accumulate.

\begin{figure}[ht]
    \centering
    \includegraphics[width=0.49\textwidth]{./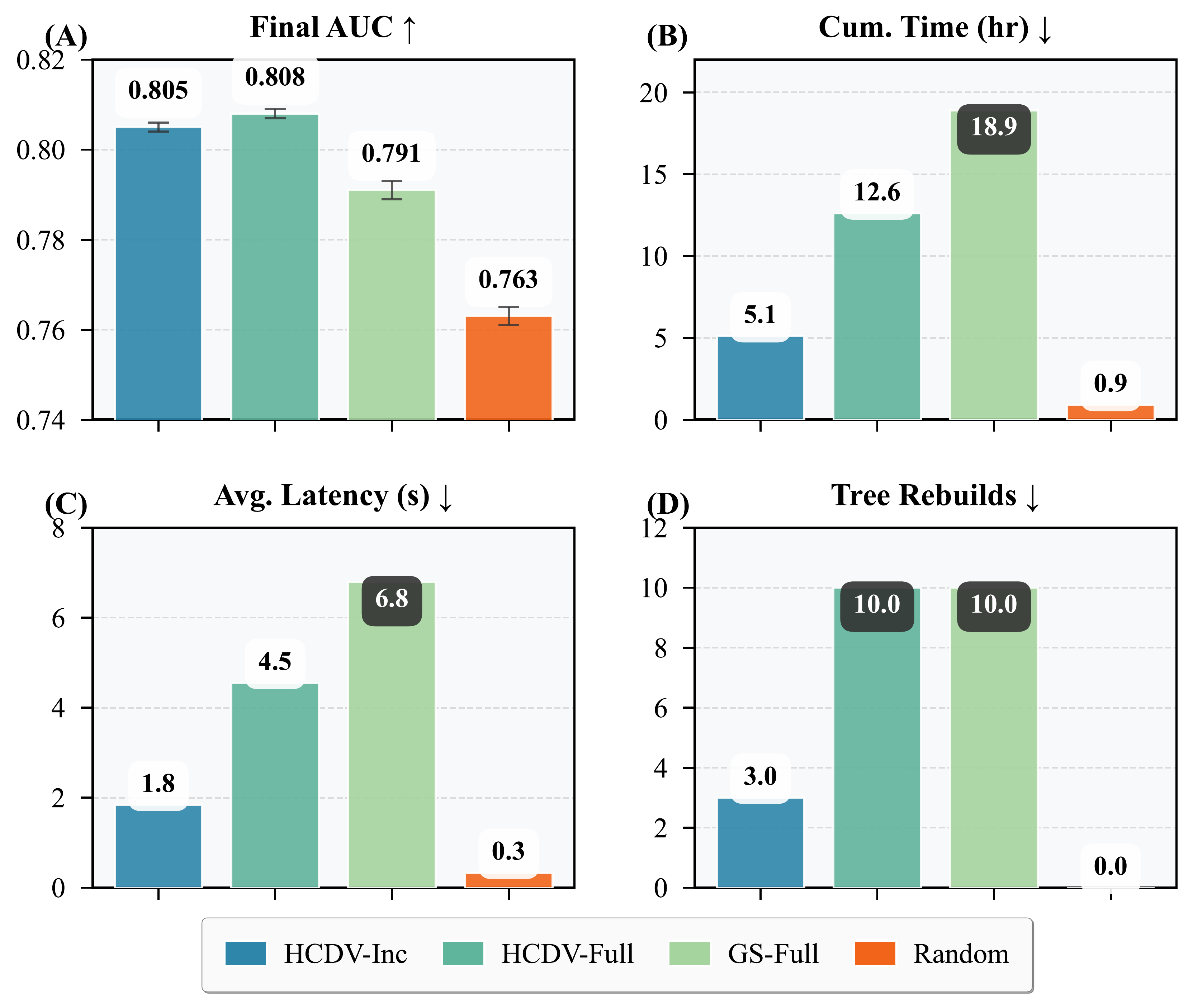}
    \caption{Streaming valuation on click-stream benchmark.
    }
    \label{fig:stream}
\end{figure}

\subsection{Fair Allocation in the Data Marketplace}
\label{sec:applications_marketplace}
A data marketplace should compensate providers in proportion to the \emph{incremental utility} their data adds to a global model.  
Because \textsc{HCDV} scales Shapley attribution to tens of thousands of points, we ask: \emph{How fairly and efficiently can it divide revenue among heterogeneous sellers?}

\vspace{1mm}
\noindent\textbf{Participants and data slices.}
We curate $P=5$ non-overlapping subsets of the \textsc{UCI Adult} training portion ($n=10\,000$).  
Each subset emphasises different demographics to induce diversity (Table~\ref{tab:market_stats}).  
The downstream model is logistic regression, performance is balanced accuracy.

\begin{table}[ht]
\centering
\small
\setlength{\tabcolsep}{4pt}
\renewcommand{\arraystretch}{1.05}
\vspace{0.25em}
\begin{tabular}{lccccc}
\toprule
\textbf{Seller} & $|{\mathcal D}_p|$ & HI & Female & Median age & \#Edu.\ levels \\
\midrule
$p_1$ & 2\,000 & 33.1 & 35.4 & 38 & 13 \\
$p_2$ & 2\,100 & 41.8 & 48.7 & 34 & 14 \\
$p_3$ & 1\,950 & 25.6 & 31.9 & 42 & 11 \\
$p_4$ & 2\,000 & 30.2 & 37.1 & 36 & 12 \\
$p_5$ & 1\,950 & 36.7 & 43.6 & 39 & 13 \\
\bottomrule
\end{tabular}
\caption{Seller profiles (\% refer to row proportions).  ``HI''\,=\,high-income label.}
\label{tab:market_stats}
\end{table}

\noindent\textbf{Oracle contributions.}
As a reference, we estimate each seller's marginal utility via leave-one-out retraining, measuring the balanced-accuracy drop when its slice is removed: $p_2$ is most influential (1.61\,pp), followed by $p_1$ (1.42\,pp), $p_5$ (1.18\,pp), $p_4$ (1.05\,pp), and $p_3$ (0.74\,pp). We treat these drops as the ground-truth marginals for comparing payoff vectors across valuation rules.


\begin{table}[ht]
\centering
\small
\setlength{\tabcolsep}{3pt}
\renewcommand{\arraystretch}{1.0}
\caption{Seller payoff analysis: Distribution and fairness metrics. $\rho$ denotes Pearson correlation with $\Delta v_p$; Gini measures inequality.}
\label{tab:combined_analysis}
\resizebox{\linewidth}{!}{%
\begin{tabular}{@{}lccccc|ccc@{}}
\toprule
\multirow{2}{*}{Method} & \multicolumn{5}{c|}{Payoff distribution ($\sum_p\Phi_p=1$)} & \multicolumn{3}{c}{Fairness \& efficiency} \\
\cmidrule(lr){2-6} \cmidrule(lr){7-9}
 & $p_1$ & $p_2$ & $p_3$ & $p_4$ & $p_5$ & $\rho\uparrow$ & Gini$\downarrow$ & Time (min) \\
\midrule
Equal Split & 0.20 & 0.20 & 0.20 & 0.20 & 0.20 & 0.00 & 0.00 & \textbf{0.1} \\
Random      & 0.18 & 0.21 & 0.23 & 0.19 & 0.19 & 0.12 & 0.08 & 0.1 \\
GS-Shapley  & 0.22 & 0.24 & 0.17 & 0.19 & 0.18 & 0.77 & 0.14 & 6.4 \\
MCDS        & 0.23 & 0.26 & 0.16 & 0.18 & 0.17 & 0.89 & 0.16 & 32.5 \\
\textsc{HCDV} (ours) & \textbf{0.25} & \textbf{0.27} & \textbf{0.15} & \textbf{0.21} & \textbf{0.22} & \textbf{0.94} & \textbf{0.11} & 3.8 \\
\bottomrule
\end{tabular}%
}
\end{table}

\noindent\textbf{Discussion.}
Table~\ref{tab:combined_analysis} reports the normalised seller payoffs $\Phi_p$ and fairness metrics. \textsc{HCDV} matches the leave-one-out marginals most closely ($\rho=0.94$) and yields the lowest payoff inequality among non-trivial methods, while requiring $8\times$ less compute than \textsc{MCDS}. Notably, $p_2$ receives the largest share due to its distinctive ``young--high-income--high-education'' mix that most benefits the classifier, whereas $p_3$'s overlapping demographics lead to a smaller, interpretable payoff.

\section{Related Work}
\label{sec:related}

\vspace{1mm}
\noindent\textbf{Data valuation.}\;
Shapley-based methods allocate data importance via co-operative game theory.  
\citet{ghorbani2019} introduce \emph{Data-Shapley} with a Monte-Carlo permutation sampler later accelerated by truncated permutations \citep{jia2019efficient}, hashing \citep{kwon2021beta}, and stratified sampling \citep{wu2023variance}.  
Group-level variants attribute value to user-defined partitions~\citep{jia2019towards}.  
\textsc{HCDV} differs by learning a multiscale hierarchy and incorporating a contrastive payoff, yielding provably tighter efficiency error with dramatically lower runtime.

\vspace{1mm}
\noindent\textbf{Geometry-aware objectives.}\;
Contrastive representation learning enlarges inter-class margins \citep{oord2018representation,chen2020simple,zhang2025enhancing}.  
Recent work links geometry to data importance-e.g., influence-function contrastive weighting \citep{wang2020less}-but stops short of valuation.  
\textsc{HCDV} is, to our knowledge, the first to reward coalitions for geometric separation within SV framework.

\section{Conclusion}
\label{sec:conclusion}
We propose \textsc{HCDV}, combining contrastive embeddings with a multiscale coalition tree to make Shapley attribution geometry-aware and scalable. Under mild assumptions, we prove logarithmic surplus loss and sharp Monte--Carlo concentration (plus a top-$k$ surrogate regret bound). Experiments show state-of-the-art valuation quality with $10$--$100\times$ speedups, enabling augmentation filtering, streaming updates, and marketplace revenue sharing. Future work includes federated valuation and active data acquisition~\citep{tao2023dudb}.

\appendix

\bibliography{aaai2026}

\clearpage

\appendix

\section*{Appendix}

\section{Additional Method Details}
\label{sec:appendix:method}

\subsection{Normalised Contrastive Dispersion}
\label{sec:appendix:norm-dispersion}

Our method uses a contrastive dispersion term to reward geometric separation between different-label samples.
In the main text (Eq.~\eqref{eq:contrast_dispersion}) we write an unnormalised sum for clarity; however, for the coalition games in Stage~III (and for the boundedness assumptions in Section~\ref{sec:theory}) we use a \emph{normalised} variant that is uniformly bounded and therefore independent of dataset size.

\paragraph{Normalised dispersion.}
Let $z_i:=f_{\theta^\star}(x_i)$ and let $\mathcal{P}(S)$ denote the set of unordered pairs in $S$ with different labels.
We define
\begin{equation}
\label{eq:disp_norm_app}
\bar{\Delta}_{c}(S)
\;:=\;
\frac{1}{\max\{1,|\mathcal{P}(S)|\}}
\sum_{(i,j)\in\mathcal{P}(S)} d(z_i,z_j),
\end{equation}
where $d(\cdot,\cdot)$ is a bounded metric (e.g., cosine distance), satisfying
$0 \le d(u,v)\le d_{\max}$ for a constant $d_{\max}$.
By construction, $\bar{\Delta}_{c}(S)\in[0,d_{\max}]$.

\paragraph{Stage~III payoff (bounded coalition game).}
At level $\ell$, the characteristic function used for Monte-Carlo coalition Shapley is
\begin{equation}
\label{eq:vl_app}
v_\ell(S)
:=
\mathcal{M}\!\Bigl(\textstyle\bigcup_{G\in S}G\Bigr)
+
\lambda\,
\bar{\Delta}_{c}\!\Bigl(\textstyle\bigcup_{G\in S}G\Bigr),
\qquad S\subseteq C_\ell.
\end{equation}
Since $\mathcal{M}(\cdot)\in[0,1]$ and $\bar{\Delta}_{c}(\cdot)\in[0,d_{\max}]$, we have
$|v_\ell(S)|\le 1+\lambda d_{\max}$, yielding the constant bound used in
Proposition~\ref{prop:concentration}.

\paragraph{Stage~I embedding objective (consistency with the main text).}
In Stage~I we optimise the encoder on mini-batches sampled from $\mathcal{P}_\text{batch}$.
On a batch $S$ of fixed size $|S|=b$, the number of cross-label pairs $|\mathcal{P}(S)|$ is bounded by $b(b-1)/2$.
Therefore, replacing $\Delta_c(S)$ in Eq.~\eqref{eq:embed_objective} by $\bar{\Delta}_c(S)$
changes the contrastive term only by a batch-dependent scaling factor.
In our implementation we use $\bar{\Delta}_c(S)$ (Eq.~\eqref{eq:disp_norm_app}) for numerical stability;
the unnormalised form in Eq.~\eqref{eq:contrast_dispersion} is a notational simplification.


\subsection{Curvature-Based Smoothness Regulariser: Computation and Complexity}
\label{sec:appendix:smoothness}

We include a curvature-based smoothness regulariser to prevent overly sharp local geometry
in the learned representation space, which improves robustness to outliers and noisy points.

\paragraph{Target quantity.}
The main text defines
\begin{equation}
\label{eq:omega_target_app}
\Omega(\theta)
\;:=\;
\sum_{(p,q)\in\mathcal{Q}}
\bigl\|\nabla_{x_p}\, d\!\bigl(f_\theta(x_p), f_\theta(x_q)\bigr)\bigr\|_2^2,
\end{equation}
where $\mathcal{Q}$ is a set of sampled pairs (typically cross-label pairs in a mini-batch).

Directly differentiating \eqref{eq:omega_target_app} w.r.t.\ $\theta$ entails second-order derivatives.
To keep training efficient, we use a first-order \emph{finite-difference} proxy that preserves the same intuition:
penalise large local changes of $d(\cdot,\cdot)$ under small input perturbations.

\paragraph{Finite-difference proxy.}
Let $r\sim\mathcal{N}(0,I)$ be a random direction and $\epsilon>0$ a small step size
(after standard input normalisation).
For each sampled pair $(p,q)\in\mathcal{Q}$ we define
\begin{equation}
\label{eq:omega_fd_app}
\begin{aligned}
\Omega_{\mathrm{FD}}(\theta)
&:= \frac{1}{|\mathcal{Q}|}\sum_{(p,q)\in\mathcal{Q}}
\mathbb{E}_{r}\!\left[\left(\frac{\Delta_{p,q}(r)}{\epsilon}\right)^2\right],\\
\Delta_{p,q}(r)
&:= d\!\bigl(f_\theta(x_p+\epsilon r),\,f_\theta(x_q)\bigr)
    - d\!\bigl(f_\theta(x_p),\,f_\theta(x_q)\bigr).
\end{aligned}
\end{equation}
approximating $\|\nabla_{x_p} d(\cdot)\|_2^2$ in expectation.
In practice we use one or two Monte-Carlo draws of $r$ per pair.

\paragraph{Implementation details.}
We sample $\mathcal{Q}$ within each mini-batch:
for each anchor $p$ we draw a small number ($m$) of partners $q$ with $y_q\neq y_p$.
The perturbed input $x_p+\epsilon r$ is clamped to the valid data range when necessary (e.g., images).
The smoothness term in Eq.~\eqref{eq:embed_objective} then uses $\Omega(\theta)\approx\Omega_{\mathrm{FD}}(\theta)$.

\paragraph{Complexity.}
Let the batch size be $b$ and the number of sampled partners per anchor be $m$.
Computing $\Omega_{\mathrm{FD}}$ requires one additional forward pass for the perturbed anchors
(and reuse of the unperturbed embeddings), hence the overhead is
$
\textsc{Cost}(\Omega_{\mathrm{FD}})
=
\mathcal{O}(bm)\ \text{pairwise distance evaluations}
\quad\text{and}\quad
\approx 1\times \text{extra encoder forward}.
$
This avoids second-order differentiation and is negligible compared with Stage~III valuation on large datasets.


\subsection{Balanced Hierarchical Clustering Construction}
\label{sec:appendix:balanced-kmeans}

Stage~II builds a coarse-to-fine hierarchy $\{C_\ell\}_{\ell=0}^{L}$ over embeddings $\{z_i\}_{i=1}^n$.
We use a balanced $k$-means procedure to prevent degenerate splits (very small or very large clusters),
which stabilises both valuation and runtime.

\paragraph{Capacity constraints.}
At depth $\ell$, we partition a parent set $G$ into $K_{\ell+1}$ children with target size
\begin{equation}
s_{\ell+1} \;=\; \left\lceil \frac{|G|}{K_{\ell+1}} \right\rceil,
\end{equation}
and enforce a capacity window
\begin{equation}
\begin{aligned}
s_{\ell+1}^{\min} &\le |G^{(\ell+1)}_k| \le s_{\ell+1}^{\max} \\
\text{(e.g., } & s^{\min}= \lfloor(1-\gamma)s\rfloor,\ s^{\max}=\lceil(1+\gamma)s\rceil\text{)}
\end{aligned}
\end{equation}
with $\gamma\in(0,1)$ a small tolerance.

\paragraph{Balanced assignment (practical heuristic).}
We first run standard $k$-means (or mini-batch $k$-means for large $|G|$) to obtain centroids.
We then apply a capacity-aware reassignment step:
samples are processed in descending order of assignment confidence (margin between the closest and second-closest centroid),
and each sample is greedily assigned to its closest centroid that has remaining capacity.
If a centroid is full, the sample is assigned to the next closest centroid with capacity.
This produces balanced clusters while largely preserving the geometric structure found by $k$-means.

\paragraph{Recursive construction.}
Starting from $C_0=\{\mathcal{D}\}$, we apply the balanced split to every node at depth $\ell$
until reaching depth $L$, where we stop once $|G|\le M$ for all leaves.

\paragraph{Complexity.}
For a parent set of size $n_G$, one $k$-means iteration costs $\mathcal{O}(n_G d K)$.
The greedy capacity correction costs $\mathcal{O}(n_G K)$ for nearest-centroid lookup
(or $\mathcal{O}(n_G\log K)$ with cached nearest lists).
Across a balanced tree, the total clustering cost is approximately $\mathcal{O}(nd\sum_\ell K_\ell)$,
and is typically dominated by Stage~III valuation when $T$ is moderate.


\subsection{Incremental Streaming Update: Pseudocode and Amortised Cost}
\label{sec:appendix:streaming}

This section details the incremental valuation procedure used in
Section~\ref{sec:stream}.
At time $t$, a mini-batch $\Delta_t$ arrives, and we update valuations on
$\mathcal{D}_t=\mathcal{D}_{t-1}\cup\Delta_t$ without rebuilding the full hierarchy.

\paragraph{State maintained.}
We maintain (i) the hierarchy structure $\{C_\ell\}_{\ell=0}^{L}$; (ii) per-node cached coalition Shapley
$\{\widehat{\psi}^{(\ell)}_G\}$; (iii) per-node budgets $\{\mathcal{B}_\ell(G)\}$; and
(iv) per-leaf sufficient statistics (count, centroid/prototype) for fast nearest-leaf assignment.

\paragraph{Incremental update algorithm.}
Let $\mathrm{leaf}(i)$ denote the leaf coalition assigned to point $i$.
We assign new points to existing leaves by cosine distance to leaf prototypes, spawning a new leaf if the
distance exceeds a threshold $\tau$.

\begin{algorithm}[t]
\caption{Incremental \textsc{HCDV} update at time $t$}
\label{alg:hcdv_inc}
\begin{algorithmic}[1]
\REQUIRE New batch $\Delta_t$; hierarchy $\{C_\ell\}$; cached $\{\widehat{\psi}^{(\ell)}_G\}$; budgets $\{\mathcal{B}_\ell(G)\}$;
threshold $\tau$; permutation budget $T$; rebalance period $m$
\ENSURE Updated valuations $\{\phi_i^{(t)}\}_{i\in\mathcal{D}_t}$

\STATE Embed new points: $z \leftarrow f_{\theta^\star}(x)$ for all $(x,y)\in\Delta_t$
\STATE \textbf{Assignment:} for each new point, assign to nearest leaf in $C_L$ (cosine); if $\mathrm{dist}>\tau$, create a new leaf
\STATE Let $\mathcal{U}_L$ be the set of affected leaves; let $\mathcal{U}$ be $\mathcal{U}_L$ plus all their ancestors up to the root
\FOR{$\ell=L, L-1, \dots, 0$}
    \STATE Let $\mathcal{U}_\ell := \mathcal{U}\cap C_\ell$
    \STATE \textbf{Local refresh:} recompute $\widehat{\psi}^{(\ell)}_G$ by Eq.~\eqref{eq:111} only for $G\in\mathcal{U}_\ell$; reuse cached values for $G\notin\mathcal{U}_\ell$
    \STATE \textbf{Budget propagation:} update child budgets for edges $(P\!\to\!H)$ that touch $\mathcal{U}$ using Eq.~\eqref{eq:234}
\ENDFOR
\IF{$t \bmod m = 0$}
    \STATE \textbf{Rebalance (optional):} if leaf sizes drift, locally split/merge overloaded/underloaded leaves and update affected subtrees
\ENDIF
\STATE Recompute leaf point-level values within affected leaves (exact if $|G|\le M$, otherwise uniform split)
\end{algorithmic}
\end{algorithm}

\paragraph{Amortised cost.}
Let $b=|\Delta_t|$ and let $\mathcal{U}$ be the set of updated nodes (affected leaves and their ancestors).
The per-step cost decomposes as:
\[
\underbrace{\mathcal{O}(b\cdot \textsc{Enc})}_{\text{embedding}}
\;+\;
\underbrace{\mathcal{O}\!\bigl(b\cdot \textsc{NN}\bigr)}_{\text{nearest-leaf assignment}}
\;+\;
\underbrace{\sum_{\ell=0}^{L}\mathcal{O}\!\bigl(T\,|\mathcal{U}_\ell|\,\tau_\ell\bigr)}_{\text{local Shapley refresh}},
\]
where $\textsc{Enc}$ is one encoder forward, $\textsc{NN}$ is the cost of nearest-prototype lookup
(e.g., $\mathcal{O}(d)$ with approximate indexing), and $\tau_\ell$ is one evaluation cost of $v_\ell(\cdot)$.
Since $|\mathcal{U}_\ell|$ is typically small (only nodes on a few root-to-leaf paths),
the refresh cost is much smaller than full recomputation, which scales with $\sum_\ell K_\ell$ at every step.
In the worst case, when $\Delta_t$ touches many leaves, the incremental method gracefully degrades to full recomputation.

\section{Proofs}
\label{sec:appendix:proofs}

For convenience, we restate the key notations. The hierarchy is
$C_0\to C_1\to\cdots\to C_L$, where $C_\ell=\{G^{(\ell)}_1,\dots,G^{(\ell)}_{K_\ell}\}$
is a partition of $\mathcal{D}$, and each node $P\in C_\ell$ has children
$\mathrm{ch}(P)\subset C_{\ell+1}$ such that $P=\biguplus_{H\in\mathrm{ch}(P)} H$.
The level-$\ell$ coalition game uses the bounded characteristic function $v_\ell(\cdot)$
(Section~\ref{sec:theory}). We denote by $\psi^{(\ell)}_G$ the exact coalition Shapley value
and by $\widehat{\psi}^{(\ell)}_G$ the Monte--Carlo estimate computed by Eq.~\eqref{eq:111}.
The per-level Monte--Carlo error is
$\varepsilon^{(\ell)}_{\mathrm{MC}}=\max_{G\in C_\ell}|\widehat{\psi}^{(\ell)}_G-\psi^{(\ell)}_G|$.

\subsection{Proof of Theorem~\ref{thm:efficiency} (Global Efficiency)}
\label{sec:appendix:proof-eff}

\begin{proof}[Proof of Theorem~\ref{thm:efficiency}]
We show that the total point-level mass produced by Algorithm~\ref{alg:hcdv}
is conserved along the hierarchy and equals the root surplus.

\paragraph{Step 1: Mass conservation under down-propagation.}
Fix a parent coalition $P\in C_\ell$ and its children
$\mathrm{ch}(P)=\{H_1,\dots,H_m\}\subset C_{\ell+1}$.
Recall the weights (Eq.~\eqref{eq:234}):
\[
\omega_{H_j}
=
\frac{\max\{v_\ell(\{H_j\}),0\}}{\sum_{j'=1}^m \max\{v_\ell(\{H_{j'}\}),0\}},
\qquad
\widetilde{\psi}^{(\ell+1)}_{H_j}
=
\omega_{H_j}\,\widehat{\psi}^{(\ell)}_{P}.
\]
By construction $\omega_{H_j}\ge 0$ and $\sum_{j=1}^m\omega_{H_j}=1$ whenever the denominator is nonzero.
Hence
\begin{equation}
\label{eq:mass_conserve_parent}
\sum_{H\in\mathrm{ch}(P)} \widetilde{\psi}^{(\ell+1)}_{H}
=
\widehat{\psi}^{(\ell)}_{P}\sum_{H\in\mathrm{ch}(P)}\omega_{H}
=
\widehat{\psi}^{(\ell)}_{P}.
\end{equation}
Summing \eqref{eq:mass_conserve_parent} over all parents $P\in C_\ell$ yields level-wise conservation:
\begin{equation}
\label{eq:mass_conserve_level}
\sum_{H\in C_{\ell+1}} \widetilde{\psi}^{(\ell+1)}_{H}
=
\sum_{P\in C_\ell}\widehat{\psi}^{(\ell)}_{P}.
\end{equation}

\paragraph{Step 2: Leaf allocation preserves mass.}
At depth $L$, each leaf coalition $G\in C_L$ is assigned a leaf total
(denote it by $\widetilde{\psi}^{(L)}_{G}$ for consistency).
If $|G|\le M$, Algorithm~\ref{alg:hcdv} computes exact Shapley among the points in $G$;
by the efficiency axiom of Shapley within this leaf game,
\begin{equation}
\label{eq:leaf_eff_exact}
\sum_{i\in G}\phi^{\mathrm{H}}_i
=
\widetilde{\psi}^{(L)}_{G}.
\end{equation}
If $|G|>M$ and a uniform split is used, then
\begin{equation}
\label{eq:leaf_eff_uniform}
\sum_{i\in G}\phi^{\mathrm{H}}_i
=
\sum_{i\in G}\frac{\widetilde{\psi}^{(L)}_{G}}{|G|}
=
\widetilde{\psi}^{(L)}_{G}.
\end{equation}
Thus, in all cases,
\begin{equation}
\label{eq:leaf_mass}
\sum_{i=1}^n \phi^{\mathrm{H}}_i
=
\sum_{G\in C_L}\widetilde{\psi}^{(L)}_{G}.
\end{equation}

\paragraph{Step 3: Telescoping across levels.}
Applying \eqref{eq:mass_conserve_level} recursively from $\ell=0$ to $\ell=L-1$ gives
\begin{equation}
\label{eq:telescope_mass}
\sum_{G\in C_L}\widetilde{\psi}^{(L)}_{G}
=
\sum_{P\in C_{L-1}}\widehat{\psi}^{(L-1)}_{P}
=\cdots=
\sum_{P\in C_0}\widehat{\psi}^{(0)}_{P}.
\end{equation}
Combining \eqref{eq:leaf_mass} and \eqref{eq:telescope_mass},
\begin{equation}
\label{eq:sum_phi_equals_root_est}
\sum_{i=1}^n \phi^{\mathrm{H}}_i
=
\sum_{P\in C_0}\widehat{\psi}^{(0)}_{P}.
\end{equation}

\paragraph{Step 4: Relate the root mass to the root surplus.}
For the root level $C_0=\{\mathcal{D}\}$ (a single coalition), the Shapley value is trivial and equals the
root surplus:
\begin{equation}
\label{eq:root_trivial}
\psi^{(0)}_{\mathcal{D}}
=
v_0(C_0)-v_0(\varnothing).
\end{equation}
If $\widehat{\psi}^{(0)}_{\mathcal{D}}$ is computed exactly (as in Algorithm~\ref{alg:hcdv} via
the explicit root surplus computation), then \eqref{eq:sum_phi_equals_root_est} and \eqref{eq:root_trivial}
imply
\[
\sum_i \phi^{\mathrm{H}}_i
=
v_0(C_0)-v_0(\varnothing),
\]
so the left-hand side of \eqref{eq:eff} is zero.
In the more general case where the root value is estimated (or when intermediate levels are not explicitly
renormalised), we can upper bound the deviation by accumulated estimation errors. Specifically,
\begin{align}
\Bigl|\sum_{i=1}^n\phi^{\mathrm{H}}_i-\bigl[v_0(C_0)-v_0(\varnothing)\bigr]\Bigr|
&=
\Bigl|\sum_{P\in C_0}\widehat{\psi}^{(0)}_{P}-\sum_{P\in C_0}\psi^{(0)}_{P}\Bigr|
\nonumber\\
&\le
\sum_{P\in C_0}\bigl|\widehat{\psi}^{(0)}_{P}-\psi^{(0)}_{P}\bigr|
\nonumber\\
&\le
|C_0|\,\varepsilon^{(0)}_{\mathrm{MC}},
\label{eq:eff_root_bound}
\end{align}
and, by the same telescoping argument level-by-level, the global deviation can be bounded by a sum of
per-level Monte--Carlo errors plus the (optional) leaf approximation term $\varepsilon_{\mathrm{leaf}}$.
Since $\varepsilon_{\mathrm{leaf}}$ only affects within-leaf allocation (not total mass),
including it yields a valid (possibly loose) bound.
This completes the proof of \eqref{eq:eff}.
\end{proof}

\subsection{Proof of Proposition~\ref{prop:concentration} (Monte--Carlo Concentration)}
\label{sec:appendix:proof-mc}

\begin{lemma}[Unbiasedness of the permutation estimator]
\label{lem:unbiased}
For any level $\ell$ and coalition $G\in C_\ell$, let $\pi$ be a uniformly random permutation of $C_\ell$.
Define
\[
X(\pi;G)
:=
v_\ell\!\bigl(\mathrm{Pre}_{\pi}(G)\cup\{G\}\bigr)
-
v_\ell\!\bigl(\mathrm{Pre}_{\pi}(G)\bigr).
\]
Then $\mathbb{E}_{\pi}[X(\pi;G)]=\psi^{(\ell)}_G$.
\end{lemma}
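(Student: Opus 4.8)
The plan is to prove the lemma via the standard equivalence between the combinatorial and the random-order formulations of the Shapley value; nothing about the specific structure of $v_\ell$ is needed beyond its being a bounded set function. First I would recall that the exact Shapley value of coalition $G$ in the level-$\ell$ game on the player set $C_\ell$ with $|C_\ell|=K_\ell$ is
\[
\psi^{(\ell)}_G \;=\; \sum_{S\subseteq C_\ell\setminus\{G\}} \frac{|S|!\,(K_\ell-|S|-1)!}{K_\ell!}\,\bigl[v_\ell(S\cup\{G\})-v_\ell(S)\bigr].
\]
The goal is then to show that $\mathbb{E}_\pi[X(\pi;G)]$ equals this expression, which I would do by partitioning the permutation sample space according to the value of the predecessor set $\mathrm{Pre}_\pi(G)$.

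The key step is a permutation count. For a fixed subset $S\subseteq C_\ell\setminus\{G\}$ with $|S|=s$, the event $\{\mathrm{Pre}_\pi(G)=S\}$ holds exactly for those permutations that place the $s$ elements of $S$ (in any order) in the first $s$ positions, put $G$ in position $s+1$, and arrange the remaining $K_\ell-s-1$ elements (in any order) afterwards; there are $s!\,(K_\ell-s-1)!$ such permutations, so $\Pr[\mathrm{Pre}_\pi(G)=S]=\frac{s!\,(K_\ell-s-1)!}{K_\ell!}$. On this event $X(\pi;G)=v_\ell(S\cup\{G\})-v_\ell(S)$, a quantity depending only on $S$. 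By the law of total expectation,
\[
\mathbb{E}_\pi[X(\pi;G)] \;=\; \sum_{S\subseteq C_\ell\setminus\{G\}} \Pr[\mathrm{Pre}_\pi(G)=S]\,\bigl[v_\ell(S\cup\{G\})-v_\ell(S)\bigr],
\]
which matches the displayed Shapley formula term by term; as a sanity check the weights sum to $1$ because $\sum_{s=0}^{K_\ell-1}\binom{K_\ell-1}{s}\frac{s!\,(K_\ell-s-1)!}{K_\ell!}=1$. Hence $\mathbb{E}_\pi[X(\pi;G)]=\psi^{(\ell)}_G$.

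I do not anticipate a real obstacle: the only mildly delicate point is the permutation count with a prescribed predecessor set, which I would state carefully but is routine. It is worth noting that this lemma is the technical backbone of Proposition~\ref{prop:concentration}: unbiasedness of $X(\pi;G)$, together with the range bound $X(\pi;G)\in[-2B,2B]$ implied by Eq.~\eqref{eq:bound}, is exactly what lets Hoeffding's inequality control the $T$-fold empirical average in Eq.~\eqref{eq:111}.
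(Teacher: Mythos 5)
Your proposal is correct and follows the same route the paper takes: the paper's proof simply invokes the ``standard permutation representation of Shapley values,'' and your argument is precisely that standard derivation spelled out in full (partitioning permutations by the predecessor set, counting $s!\,(K_\ell-s-1)!$ permutations per subset, and applying the law of total expectation). No gaps; the detail you add on the permutation count is exactly what the paper leaves implicit.
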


\begin{proof}
This is the standard permutation representation of Shapley values:
each permutation $\pi$ induces exactly one marginal contribution term for $G$, and averaging over
$\pi\sim\mathrm{Unif}(\mathfrak{S}_{K_\ell})$ yields the Shapley value by definition.
\end{proof}

\begin{proof}[Proof of Proposition~\ref{prop:concentration}]
Fix a level $\ell$ and a coalition $G\in C_\ell$.
Let $\pi_1,\dots,\pi_T$ be i.i.d.\ uniform permutations of $C_\ell$ and define
\[
X_t := X(\pi_t;G)
=
v_\ell\!\bigl(\mathrm{Pre}_{\pi_t}(G)\cup\{G\}\bigr)
-
v_\ell\!\bigl(\mathrm{Pre}_{\pi_t}(G)\bigr),
\qquad t=1,\dots,T.
\]
By Lemma~\ref{lem:unbiased}, $\mathbb{E}[X_t]=\psi^{(\ell)}_G$ and
$\widehat{\psi}^{(\ell)}_G=\frac{1}{T}\sum_{t=1}^T X_t$.

\paragraph{Step 1: Bounded differences.}
Under the boundedness assumption \eqref{eq:bound}, for any set $S\subseteq C_\ell$,
$|v_\ell(S)|\le B$, hence
\begin{equation}
\label{eq:bounded_X}
|X_t|
\le
|v_\ell(\mathrm{Pre}_{\pi_t}(G)\cup\{G\})| + |v_\ell(\mathrm{Pre}_{\pi_t}(G))|
\le 2B.
\end{equation}
Therefore, $X_t\in[-2B,2B]$ almost surely.

\paragraph{Step 2: Hoeffding's inequality.}
Since $\{X_t\}_{t=1}^T$ are i.i.d.\ and bounded in an interval of length $4B$,
Hoeffding's inequality yields, for any $\eta>0$,
\begin{align}
\Pr\!\left[
\left|\widehat{\psi}^{(\ell)}_G-\psi^{(\ell)}_G\right|
\ge \eta
\right]
&=
\Pr\!\left[
\left|\frac{1}{T}\sum_{t=1}^T(X_t-\mathbb{E}X_t)\right|
\ge \eta
\right]
\nonumber\\
&\le
2\exp\!\left(
-\frac{2T\eta^2}{(4B)^2}
\right)
\nonumber\\
&=
2\exp\!\left(
-\frac{T\eta^2}{8B^2}
\right),
\label{eq:hoeffding_proof}
\end{align}
which is exactly \eqref{eq:hoeffding}.

\paragraph{Step 3: Union bound over coalitions.}
Let $\mathcal{E}_G$ be the event that
$|\widehat{\psi}^{(\ell)}_G-\psi^{(\ell)}_G|\ge\eta$.
By \eqref{eq:hoeffding_proof} and a union bound,
\begin{align}
\Pr\!\left[
\varepsilon^{(\ell)}_{\mathrm{MC}}\ge \eta
\right]
&=
\Pr\!\left[
\bigcup_{G\in C_\ell}\mathcal{E}_G
\right]
\nonumber\\
&\le
\sum_{G\in C_\ell}\Pr[\mathcal{E}_G]
\nonumber\\
&\le
2K_\ell\exp\!\left(-\frac{T\eta^2}{8B^2}\right).
\label{eq:union_bound}
\end{align}
Setting the right-hand side to $\delta$ and solving for $\eta$ yields
$\eta=\mathcal{O}\!\bigl(B\sqrt{\frac{\log(K_\ell/\delta)}{T}}\bigr)$, proving
\eqref{eq:mc_rate} in probability.
\end{proof}

\subsection{Proof of Theorem~\ref{thm:regret} (Top--$k$ Surrogate Regret)}
\label{sec:appendix:proof-regret}

\begin{proof}[Proof of Theorem~\ref{thm:regret}]
Let $\phi:=\phi^{\mathrm{Sh}}$ and $\tilde{\phi}:=\phi^{\mathrm{H}}$.
Define the surrogate utility $U_\phi(S)=\sum_{i\in S}\phi_i$.
Let $S^\star:=\mathcal{S}_k^{\mathrm{Sh}}$ be the maximiser of $U_\phi(S)$ over all $|S|=k$,
and let $\widetilde{S}:=\mathcal{S}_k^{\mathrm{H}}$ be the maximiser of $U_{\tilde{\phi}}(S)$.

\paragraph{Step 1: Non-negativity.}
By definition of $S^\star$,
\begin{equation}
\label{eq:nonneg}
U_\phi(S^\star)-U_\phi(\widetilde{S})\ge 0.
\end{equation}

\paragraph{Step 2: Compare $\tilde{\phi}$-optimality to $\phi$-optimality.}
Since $\widetilde{S}$ maximises $U_{\tilde{\phi}}(\cdot)$,
\begin{equation}
\label{eq:opt_tilde}
\sum_{i\in\widetilde{S}}\tilde{\phi}_i
\ge
\sum_{i\in S^\star}\tilde{\phi}_i.
\end{equation}
Write $\tilde{\phi}_i=\phi_i+\delta_i$ with $|\delta_i|\le\varepsilon_\infty$ for all $i$.
Substituting into \eqref{eq:opt_tilde} gives
\begin{equation}
\label{eq:expand_delta}
\sum_{i\in\widetilde{S}}\phi_i
-
\sum_{i\in S^\star}\phi_i
\ge
\sum_{i\in S^\star}\delta_i
-
\sum_{i\in\widetilde{S}}\delta_i.
\end{equation}
Rearranging \eqref{eq:expand_delta} yields the regret under $\phi$:
\begin{equation}
\label{eq:regret_delta}
U_\phi(S^\star)-U_\phi(\widetilde{S})
\le
\sum_{i\in\widetilde{S}}\delta_i
-
\sum_{i\in S^\star}\delta_i.
\end{equation}

\paragraph{Step 3: Bound the right-hand side.}
Using $|\delta_i|\le\varepsilon_\infty$ and $|S^\star|=|\widetilde{S}|=k$,
\begin{align}
\sum_{i\in\widetilde{S}}\delta_i
-
\sum_{i\in S^\star}\delta_i
&\le
\sum_{i\in\widetilde{S}}|\delta_i|
+
\sum_{i\in S^\star}|\delta_i|
\nonumber\\
&\le
k\varepsilon_\infty + k\varepsilon_\infty
\nonumber\\
&=
2k\varepsilon_\infty.
\label{eq:2k_bound}
\end{align}
Combining \eqref{eq:nonneg}, \eqref{eq:regret_delta}, and \eqref{eq:2k_bound} proves
\eqref{eq:surrogate_regret}.
\end{proof}

\subsection{Approximate Shapley Axioms Beyond Efficiency}
\label{sec:appendix:axioms}

This subsection formalises how \textsc{HCDV} inherits Shapley-style axioms
\emph{locally} (within each coalition game), and how Monte--Carlo estimation
yields controlled deviations. Throughout, we condition on a \emph{fixed} hierarchy
$\{C_\ell\}$ and fixed propagation weights (computed from the data once).

\paragraph{Local axiom satisfaction (exact).}
For any level $\ell$, consider the coalition game with players $C_\ell$ and characteristic function $v_\ell(\cdot)$.
The exact Shapley vector $\psi^{(\ell)}$ satisfies, by standard Shapley theory:
(i) efficiency $\sum_{G\in C_\ell}\psi^{(\ell)}_G=v_\ell(C_\ell)-v_\ell(\varnothing)$;
(ii) symmetry: symmetric coalitions receive equal value;
(iii) dummy: a dummy coalition gets zero value; and
(iv) additivity: for any $v_\ell^{(1)},v_\ell^{(2)}$ on the same player set,
$\psi^{(\ell)}(v_\ell^{(1)}+v_\ell^{(2)})=\psi^{(\ell)}(v_\ell^{(1)})+\psi^{(\ell)}(v_\ell^{(2)})$.

We now quantify how these properties degrade under Monte--Carlo estimation and hierarchical propagation.

\begin{proposition}[Approximate symmetry and dummy (coalition level)]
\label{prop:approx_sym_dummy_coal}
Fix a level $\ell$ and suppose the Monte--Carlo estimates satisfy
$|\widehat{\psi}^{(\ell)}_G-\psi^{(\ell)}_G|\le \eta$ for all $G\in C_\ell$.
Then:
\begin{enumerate}
\item (Symmetry) If two coalitions $G,H\in C_\ell$ are symmetric under $v_\ell(\cdot)$, then
$|\widehat{\psi}^{(\ell)}_G-\widehat{\psi}^{(\ell)}_H|\le 2\eta$.
\item (Dummy) If $G\in C_\ell$ is a dummy player under $v_\ell(\cdot)$ (hence $\psi^{(\ell)}_G=0$), then
$|\widehat{\psi}^{(\ell)}_G|\le \eta$.
\end{enumerate}
\end{proposition}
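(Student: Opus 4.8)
The plan is to derive both parts as immediate stability-under-perturbation corollaries of the \emph{exact} coalition-level Shapley axioms recalled just above the proposition: on the player set $C_\ell$ with characteristic function $v_\ell(\cdot)$, the exact vector $\psi^{(\ell)}$ satisfies symmetry and the dummy property. Since Stage~III treats every element of $C_\ell$ as an atomic player, the level-$\ell$ game is an ordinary finite cooperative game, so ``symmetric coalitions'' and ``dummy coalition'' mean exactly the usual notions of symmetric and dummy players, and standard Shapley theory applies verbatim. The only new ingredient relative to the exact setting is the Monte--Carlo perturbation, uniformly bounded by $\eta$ by hypothesis, so everything reduces to the triangle inequality.

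For the symmetry claim I would take $G,H\in C_\ell$ symmetric under $v_\ell$, invoke exact symmetry to get $\psi^{(\ell)}_G=\psi^{(\ell)}_H$, and then write
\begin{equation}
\bigl|\widehat{\psi}^{(\ell)}_G-\widehat{\psi}^{(\ell)}_H\bigr|
\;\le\;
\bigl|\widehat{\psi}^{(\ell)}_G-\psi^{(\ell)}_G\bigr|
+\bigl|\psi^{(\ell)}_G-\psi^{(\ell)}_H\bigr|
+\bigl|\psi^{(\ell)}_H-\widehat{\psi}^{(\ell)}_H\bigr|
\;\le\;\eta+0+\eta\;=\;2\eta .
\end{equation}
For the dummy claim, if $G\in C_\ell$ is a dummy player then exact Shapley gives $\psi^{(\ell)}_G=0$, and the assumed error bound yields directly $\bigl|\widehat{\psi}^{(\ell)}_G\bigr|=\bigl|\widehat{\psi}^{(\ell)}_G-\psi^{(\ell)}_G\bigr|\le\eta$. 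Both inequalities are deterministic consequences of the uniform estimation bound; chaining them with Proposition~\ref{prop:concentration} (which supplies $\eta=\mathcal{O}_{\mathbb{P}}(B\sqrt{\log K_\ell/T})$ after a union bound over $C_\ell$) shows the coalition-level outputs of \textsc{HCDV} satisfy symmetry and dummy up to an error vanishing as $T\to\infty$.

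I do not expect a genuine obstacle here: the statement is essentially a restatement of the exact axioms modulo an $\eta$-perturbation, and the only point requiring a sentence of care is confirming that symmetry/dummy for the \emph{coalition} game (rather than for individual data points) is the standard notion---which it is, because the hierarchy fixes the players of each local game before any estimation takes place. If anything, the subtlety lies one level up, in propagating these coalition-level guarantees down to point-level values, but that is handled separately and is outside the scope of this proposition.
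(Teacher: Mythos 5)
Your proposal is correct and matches the paper's own proof: both parts reduce to the exact Shapley symmetry/dummy axioms for the level-$\ell$ coalition game plus the triangle inequality under the uniform $\eta$ bound. The only cosmetic difference is that you insert the (vanishing) middle term $|\psi^{(\ell)}_G-\psi^{(\ell)}_H|$ explicitly, while the paper collapses it immediately.
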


\begin{proof}
For symmetry, $\psi^{(\ell)}_G=\psi^{(\ell)}_H$ by the Shapley symmetry axiom, hence
\[
|\widehat{\psi}^{(\ell)}_G-\widehat{\psi}^{(\ell)}_H|
\le
|\widehat{\psi}^{(\ell)}_G-\psi^{(\ell)}_G|
+
|\widehat{\psi}^{(\ell)}_H-\psi^{(\ell)}_H|
\le 2\eta.
\]
For dummy, $\psi^{(\ell)}_G=0$, so
$|\widehat{\psi}^{(\ell)}_G|
=
|\widehat{\psi}^{(\ell)}_G-\psi^{(\ell)}_G|
\le \eta$.
\end{proof}

\begin{proposition}[Additivity of the Monte--Carlo Shapley estimator]
\label{prop:mc_additivity}
Fix a level $\ell$ and a set of permutations $\{\pi_t\}_{t=1}^T$.
For any two characteristic functions $v_\ell^{(1)},v_\ell^{(2)}$ on the same player set $C_\ell$,
the Monte--Carlo estimator computed by Eq.~\eqref{eq:111} satisfies
\begin{equation}
\label{eq:mc_add_exact}
\widehat{\psi}^{(\ell)}\!\left(v_\ell^{(1)}+v_\ell^{(2)}\right)
=
\widehat{\psi}^{(\ell)}\!\left(v_\ell^{(1)}\right)
+
\widehat{\psi}^{(\ell)}\!\left(v_\ell^{(2)}\right).
\end{equation}
\end{proposition}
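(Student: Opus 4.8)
The plan is to exploit the linearity of each individual term inside the empirical average in Eq.~\eqref{eq:111}. Fix the level $\ell$ and the (arbitrary but fixed) permutations $\pi_1,\dots,\pi_T$. For a characteristic function $v$ and a coalition $G\in C_\ell$, write $X_t(v;G):=v\bigl(\mathrm{Pre}_{\pi_t}(G)\cup\{G\}\bigr)-v\bigl(\mathrm{Pre}_{\pi_t}(G)\bigr)$, so that $\widehat{\psi}^{(\ell)}_G(v)=\frac1T\sum_{t=1}^T X_t(v;G)$. The key observation is that $X_t(\cdot;G)$ is an \emph{affine-linear functional of $v$ evaluated at two fixed arguments}: the set $\mathrm{Pre}_{\pi_t}(G)$ is determined entirely by $\pi_t$ and $G$, and does not depend on $v$. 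Since $(v_\ell^{(1)}+v_\ell^{(2)})(S)=v_\ell^{(1)}(S)+v_\ell^{(2)}(S)$ pointwise by definition of the sum, we immediately get $X_t(v_\ell^{(1)}+v_\ell^{(2)};G)=X_t(v_\ell^{(1)};G)+X_t(v_\ell^{(2)};G)$ for every $t$ and every $G$.

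The proof then consists of two short steps. First I would establish the per-permutation identity just described, simply by substituting the definition of the pointwise sum into the marginal-contribution difference and regrouping. Second I would average over $t=1,\dots,T$: because summation commutes with summation, $\frac1T\sum_t\bigl[X_t(v_\ell^{(1)};G)+X_t(v_\ell^{(2)};G)\bigr]=\frac1T\sum_t X_t(v_\ell^{(1)};G)+\frac1T\sum_t X_t(v_\ell^{(2)};G)$, which is exactly $\widehat{\psi}^{(\ell)}_G(v_\ell^{(1)})+\widehat{\psi}^{(\ell)}_G(v_\ell^{(2)})$. Since $G\in C_\ell$ was arbitrary, this proves the vector identity~\eqref{eq:mc_add_exact}.

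There is essentially no obstacle here: the statement is exact (no error terms), and it holds for \emph{any} fixed permutation set, not merely in expectation — unlike the symmetry/dummy propositions, no concentration bound is needed. The only point worth stating carefully is that the estimator must be computed with the \emph{same} draw $\{\pi_t\}_{t=1}^T$ for $v_\ell^{(1)}$, $v_\ell^{(2)}$, and $v_\ell^{(1)}+v_\ell^{(2)}$; if independent permutation samples were used for each, the identity would hold only in expectation. I would make that coupling explicit in the proposition's hypothesis (it already says ``a set of permutations $\{\pi_t\}_{t=1}^T$'', so this is satisfied) and then the argument is a two-line computation. I would also remark in passing that this gives an \emph{exact} (not merely approximate) additivity axiom for \textsc{HCDV} at the coalition level, and that combined with the linearity of the budget-propagation weights — which, however, depend on $v_\ell$ through the normalisation and hence are \emph{not} linear in $v_\ell$ in general — full point-level additivity holds whenever the propagation weights are held fixed across $v_\ell^{(1)}$ and $v_\ell^{(2)}$.
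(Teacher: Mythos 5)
Your proposal is correct and follows essentially the same route as the paper: both observe that the marginal contribution $v(\mathrm{Pre}_{\pi_t}(G)\cup\{G\})-v(\mathrm{Pre}_{\pi_t}(G))$ is linear in $v$ for each fixed permutation and coalition, and then average over $t$. Your explicit remark that the identity requires the \emph{same} permutation set for all three estimators matches the paper's own caveat in its remark on additivity, so there is nothing to add.
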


\begin{proof}
For any coalition $G\in C_\ell$ and any permutation $\pi_t$,
the marginal contribution under $v_\ell^{(1)}+v_\ell^{(2)}$ is
\[
\bigl(v_\ell^{(1)}+v_\ell^{(2)}\bigr)\!\bigl(\mathrm{Pre}_{\pi_t}(G)\cup\{G\}\bigr)
-
\bigl(v_\ell^{(1)}+v_\ell^{(2)}\bigr)\!\bigl(\mathrm{Pre}_{\pi_t}(G)\bigr),
\]
which equals the sum of the two marginal contributions under $v_\ell^{(1)}$ and $v_\ell^{(2)}$.
Averaging over $t=1,\dots,T$ preserves linearity, proving \eqref{eq:mc_add_exact}.
\end{proof}

\paragraph{Propagation preserves symmetry/dummy within a leaf.}
Finally, we connect coalition-level properties to point-level axioms.
Let $\phi^{\mathrm{Sh}}$ be the \emph{ideal} hierarchical allocation
(exact Shapley at each level and exact leaf Shapley for $|G|\le M$), and let $\phi^{\mathrm{H}}$
be the Monte--Carlo output.
Define the pointwise deviation $\varepsilon_\infty := \|\phi^{\mathrm{H}}-\phi^{\mathrm{Sh}}\|_\infty$.

\begin{proposition}[Approximate symmetry and dummy (point level)]
\label{prop:approx_sym_dummy_point}
Assume $|G|\le M$ for all leaves so that leaf Shapley is computed exactly.
Then:
\begin{enumerate}
\item (Symmetry) If two points $i,j$ lie in the same leaf and are symmetric in that leaf game, then
$|\phi^{\mathrm{H}}_i-\phi^{\mathrm{H}}_j|\le 2\varepsilon_\infty$.
\item (Dummy) If point $k$ is a dummy player in its leaf game (hence $\phi^{\mathrm{Sh}}_k=0$), then
$|\phi^{\mathrm{H}}_k|\le \varepsilon_\infty$.
\end{enumerate}
\end{proposition}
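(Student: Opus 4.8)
The plan is to reduce both claims to the triangle inequality, exploiting the fact that under the stated hypothesis $|G|\le M$ for every leaf, the ideal allocation $\phi^{\mathrm{Sh}}$ coincides \emph{within each leaf} with an exact Shapley vector of the leaf-level game, so it satisfies the symmetry and dummy axioms there verbatim. First I would fix the leaf $G$ containing the points in question and write $v^{\mathrm{leaf}}_G$ for its characteristic function. If $i$ and $j$ are symmetric in this game, i.e.\ $v^{\mathrm{leaf}}_G(S\cup\{i\})=v^{\mathrm{leaf}}_G(S\cup\{j\})$ for all $S\subseteq G\setminus\{i,j\}$, the classical Shapley symmetry axiom gives $\phi^{\mathrm{Sh}}_i=\phi^{\mathrm{Sh}}_j$; if $k$ is a dummy in this game, the dummy axiom gives $\phi^{\mathrm{Sh}}_k=0$. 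These are the only properties of $\phi^{\mathrm{Sh}}$ that the argument needs.

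With those two exact identities in hand, the bounds are immediate. For symmetry, insert $\phi^{\mathrm{Sh}}_i$ and $\phi^{\mathrm{Sh}}_j$ and apply $\phi^{\mathrm{Sh}}_i=\phi^{\mathrm{Sh}}_j$ together with the definition $\varepsilon_\infty=\|\phi^{\mathrm{H}}-\phi^{\mathrm{Sh}}\|_\infty$:
\[
|\phi^{\mathrm{H}}_i-\phi^{\mathrm{H}}_j|
\;\le\;
|\phi^{\mathrm{H}}_i-\phi^{\mathrm{Sh}}_i|
+|\phi^{\mathrm{Sh}}_i-\phi^{\mathrm{Sh}}_j|
+|\phi^{\mathrm{Sh}}_j-\phi^{\mathrm{H}}_j|
\;\le\;
\varepsilon_\infty+0+\varepsilon_\infty
\;=\;2\varepsilon_\infty.
\]
For the dummy case, since $\phi^{\mathrm{Sh}}_k=0$ we get $|\phi^{\mathrm{H}}_k|=|\phi^{\mathrm{H}}_k-\phi^{\mathrm{Sh}}_k|\le\varepsilon_\infty$ directly. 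Neither step uses the boundedness constant $B$ or the Monte--Carlo structure beyond what is already packaged into the single scalar $\varepsilon_\infty$.

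The one genuinely non-routine point — and the place I would be most careful — is justifying that the within-leaf component of $\phi^{\mathrm{Sh}}$ really inherits the leaf game's axioms despite the top-down budget propagation. The leaf receives a scalar pot $\widetilde{\psi}^{(L)}_G$ and Algorithm~\ref{alg:hcdv} then distributes it by exact Shapley of $v^{\mathrm{leaf}}_G$, so the within-leaf allocation is a fixed nonnegative multiple of that Shapley vector (equivalently, the Shapley vector of a rescaled leaf game). Since multiplication by a fixed scalar preserves equality of symmetric entries and keeps a dummy's value at zero, both axioms survive the rescaling, which is precisely what licenses treating $\phi^{\mathrm{Sh}}_i=\phi^{\mathrm{Sh}}_j$ and $\phi^{\mathrm{Sh}}_k=0$ as established facts. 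Everything after that is the two-line triangle-inequality computation above.
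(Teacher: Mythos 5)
Your proposal is correct and follows essentially the same route as the paper: invoke the exact Shapley symmetry/dummy axioms within the leaf game to get $\phi^{\mathrm{Sh}}_i=\phi^{\mathrm{Sh}}_j$ (resp.\ $\phi^{\mathrm{Sh}}_k=0$), then apply the triangle inequality with the definition of $\varepsilon_\infty$. Your extra remark that the budget-propagated leaf allocation is a nonnegative scalar multiple of the leaf Shapley vector (so symmetry and dummy survive the rescaling) is a worthwhile clarification that the paper's proof leaves implicit, but it does not change the argument.
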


\begin{proof}
For (1), symmetry in the leaf game implies $\phi^{\mathrm{Sh}}_i=\phi^{\mathrm{Sh}}_j$.
Therefore
\begin{align*}
|\phi^{\mathrm{H}}_i-\phi^{\mathrm{H}}_j|
&\le
|\phi^{\mathrm{H}}_i-\phi^{\mathrm{Sh}}_i|
+
|\phi^{\mathrm{H}}_j-\phi^{\mathrm{Sh}}_j|
\le
\varepsilon_\infty + \varepsilon_\infty
=
2\varepsilon_\infty.
\end{align*}
For (2), $\phi^{\mathrm{Sh}}_k=0$, hence
$|\phi^{\mathrm{H}}_k|=|\phi^{\mathrm{H}}_k-\phi^{\mathrm{Sh}}_k|\le\varepsilon_\infty$.
\end{proof}

\paragraph{Bounding $\varepsilon_\infty$ from coalition-level errors (high probability).}
A convenient sufficient condition is the event
\[
\mathcal{E}(\eta)
:=
\bigcap_{\ell=0}^{L}\ \bigcap_{G\in C_\ell}
\left\{
|\widehat{\psi}^{(\ell)}_G-\psi^{(\ell)}_G|\le \eta
\right\}.
\]
By Proposition~\ref{prop:concentration} and a union bound over all levels and coalitions,
for any $\delta\in(0,1)$, taking
$T=\Theta\!\bigl(B^2(\log(\sum_\ell K_\ell)+\log(1/\delta))/\eta^2\bigr)$
ensures $\Pr[\mathcal{E}(\eta)]\ge 1-\delta$.
On $\mathcal{E}(\eta)$, Proposition~\ref{prop:approx_sym_dummy_coal} applies at every level, and the
resulting pointwise deviation $\varepsilon_\infty$ is $O(\eta)$ up to constants that depend only on the
fixed (user-chosen) branching factors and the depth $L=\Theta(\log n)$, yielding the stated
``logarithmic'' accumulation behaviour in the main text.

\paragraph{Remark on additivity.}
Proposition~\ref{prop:mc_additivity} shows that additivity holds \emph{exactly} for the Monte--Carlo
Shapley estimator when the same permutations are used.
When the hierarchy and propagation weights are fixed, all subsequent steps of \textsc{HCDV}
(budget scaling, nonnegative weight propagation, and leaf splitting) are linear maps applied to these
local Shapley vectors; therefore, the overall procedure is additive under common random numbers.
If independent randomness is used across runs, additivity holds in expectation and concentrates by
Proposition~\ref{prop:concentration}.

\section{Experimental Details}
\label{sec:appendix:expt}

\subsection{Datasets, Preprocessing, and Metrics}
\label{sec:appendix:data-metrics}

\paragraph{Datasets.}
We evaluate \textsc{HCDV} on four benchmarks of increasing scale:
(i) \textit{Synthetic} ($n{=}3{,}000$): a two-class Gaussian mixture where each class is further split into three latent sub-clusters with mild overlap;
(ii) \textit{UCI Adult} ($n{\approx}48{,}842$): binary income prediction with mixed numerical and categorical features;
(iii) \textit{Fashion-MNIST} ($n{=}70{,}000$): 10-class image classification;
(iv) \textit{Criteo-1B$^\ast$} ($n{\approx}45$M): CTR prediction on a one-week slice of the Criteo log data.
We additionally follow the standard \textsc{OpenDataVal} benchmark protocol on \textit{bbc-embedding},
\textit{IMDB-embedding}, and \textit{CIFAR10-embedding}.

\paragraph{Preprocessing.}
Unless otherwise stated, we apply standard preprocessing for each modality.
For tabular datasets (Synthetic, UCI Adult), categorical features are one-hot encoded (or target-hashed when necessary),
and numerical features are standardised using training-set statistics.
For image datasets (Fashion-MNIST), pixel values are normalised to $[0,1]$ and standard train/test splits are used.
For CTR data (Criteo-1B$^\ast$), continuous features are normalised and categorical features are mapped to integer IDs
(with hashing or frequency thresholding if needed); the exact featurisation follows the released data format for the
one-week slice.

\paragraph{Splits and repetition.}
For \textit{Synthetic}, \textit{UCI Adult}, \textit{Fashion-MNIST}, and \textit{Criteo-1B$^\ast$},
we use three independent random train/validation/test splits (or standard public splits when available)
and report mean$\pm$std over three runs.
In all main-result tables, the reported predictive metric is measured after retraining the downstream model on the
\emph{top $30\%$ valued training points} selected by each valuation method (Section~\ref{sec:applications_summary}).

\paragraph{Predictive metrics.}
We report AUC for the binary classification tasks (\textit{Synthetic}, \textit{Criteo-1B$^\ast$}),
balanced accuracy for \textit{UCI Adult}, and test accuracy for \textit{Fashion-MNIST}.
For \textsc{OpenDataVal}, we follow the benchmark: macro-averaged F1 for noisy label detection (NLD) and
test accuracy for dataset removal/addition (DR/DA), under label corruption rates of $5\%$ and $15\%$.

\paragraph{Valuation stability.}
To quantify run-to-run stability, we repeat each valuation procedure $R$ times (changing only the valuation randomness,
e.g., permutation seeds) and compute a point-wise coefficient of variation:
\begin{equation}
\label{eq:cv_def}
\mathrm{CV}_i \;:=\;
\frac{\mathrm{Std}\bigl(\{\phi_i^{(r)}\}_{r=1}^R\bigr)}
     {\bigl|\mathrm{Mean}(\{\phi_i^{(r)}\}_{r=1}^R)\bigr|+\epsilon},
\end{equation}
with a small $\epsilon$ for numerical stability. We report the average
$\frac{1}{n}\sum_{i=1}^n \mathrm{CV}_i$ as ``Valuation Stability'' (lower is better).

\paragraph{Runtime measurement.}
Wall-clock valuation time is measured on a single NVIDIA A100 (40GB) with a 32-core CPU, using identical
evaluation pipelines across methods (Table~\ref{tab:main_time}).

\paragraph{Application-specific metrics.}
For augmentation filtering (Section~\ref{sec:applications_augmented}),
we report (i) test accuracy after replacing $30\%$ of the original training set with a selected 1k augmentation slice,
(ii) training time (minutes), (iii) \emph{cluster overlap}: the fraction of selected augmentations whose assigned
leaf cluster already contains at least one original image, and (iv) class coverage (number of unique labels in the slice).
For streaming (Section~\ref{sec:stream}), we report final-day AUC after training on the top $20\%$ valued points of
$\mathcal{D}_{10}$, cumulative valuation time across $t{=}1,\dots,10$, and average update latency.
For marketplace allocation (Section~\ref{sec:applications_marketplace}), we report Pearson correlation $\rho$
between seller payoffs and leave-one-out marginal drops, and the Gini coefficient of the payoff distribution.


\subsection{Downstream Models and Training Hyperparameters}
\label{sec:appendix:models-hparams}

\paragraph{Downstream models.}
We use modality-appropriate supervised learners:
\begin{itemize}\setlength{\itemsep}{2pt}
\item \textbf{Synthetic:} a lightweight binary classifier trained on the selected points; AUC is computed on the held-out test set.
\item \textbf{UCI Adult:} a tabular classifier; balanced accuracy is computed on the held-out test set.
\item \textbf{Fashion-MNIST:} a small ConvNet (``ConvNet'' in Section~\ref{sec:applications_augmented}); test accuracy is reported.
\item \textbf{Criteo-1B$^\ast$:} a CTR model (e.g., a Wide\&Deep-style network); AUC is reported.
\item \textbf{Streaming:} a Wide\&Deep network with a 2$\times$128 ReLU MLP tower and a sigmoid head (Section~\ref{sec:stream}).
\item \textbf{Marketplace:} logistic regression; balanced accuracy is used (Section~\ref{sec:applications_marketplace}).
\end{itemize}

\paragraph{Training protocol (shared across valuation methods).}
For each method, we first compute valuations on the full training pool, then select a subset (top-$30\%$ for the main results,
top-$20\%$ for streaming), and \emph{retrain the downstream model from scratch} on that subset using identical optimisation settings.
Validation is used for early stopping and hyperparameter selection, and the final metric is reported on a held-out test split.

\paragraph{Optimisation.}
Unless otherwise specified by the benchmark (e.g., \textsc{OpenDataVal}), models are trained with a standard
stochastic optimiser (SGD/Adam), minibatch training, and early stopping.
All methods share the same model architecture, optimiser, and training schedule within each dataset to ensure fair comparison.


\end{document}